\crefname{algocf}{alg.}{algs.} 
\Crefname{algocf}{Algorithm}{Algorithms}
\newtheorem{theorem}{Theorem}
\newtheorem*{theorem*}{Theorem}
\newtheorem{proposition}[theorem]{Proposition}
\newtheorem*{proposition*}{Proposition}
\newtheorem{lemma}[theorem]{Lemma}
\newtheorem*{lemma*}{Lemma}
\newtheorem{definition}[theorem]{Definition}
\definecolor{bluecite}{HTML}{0875b7}
\newcommand{\col}{\text{Col}}
\newcommand{\changemarker}[1]{#1} 
\title{Relational Graph Convolutional Networks Do Not Learn Sound Rules}
\author{%
Matthew Morris$^1$\and
David J. Tena Cucala$^{1,2}$\and
Bernardo Cuenca Grau$^1$\and
Ian Horrocks$^1$ \\
\affiliations
$^1$Department of Compute Science, University of Oxford\\
$^2$Department of Computer Science, Royal Holloway, University of London
\emails
\{matthew.morris, bernardo.cuenca.grau, ian.horrocks\}@cs.ox.ac.uk,
david.tenacucala@rhul.ac.uk
}
\begin{document}

\maketitle

\begin{abstract}
Graph neural networks (GNNs) are frequently used to predict missing facts in knowledge graphs (KGs). Motivated by the lack of explainability for the outputs of these models, recent work has aimed to explain their predictions using Datalog, a widely used logic-based formalism. However, such work has been restricted to certain subclasses of GNNs. In this paper, we consider one of the most popular GNN architectures for KGs, R-GCN, and we provide two methods to extract rules that explain its predictions \changemarker{and are \emph{sound}, in the sense that each fact derived by the rules is also predicted by the GNN, for any input dataset}. Furthermore, we provide a method that can verify that certain classes of Datalog rules are not sound for the R-GCN. In our experiments, we train R-GCNs on KG completion benchmarks, and we are able to verify that no Datalog rule is sound for these models, even though the models often obtain high to near-perfect accuracy. This raises some concerns about the ability of R-GCN models to generalise and about the explainability of their predictions. We further provide two variations to the training paradigm of R-GCN that encourage it to learn sound rules and find a trade-off between model accuracy and the number of learned sound rules.
\end{abstract}
\section{Introduction}

Knowledge graphs (KGs) are graph-structured knowledge bases where nodes and edges represent entities and their relationships, respectively \cite{DBLP:journals/csur/HoganBCdMGKGNNN21}. KGs can be typically stored as sets of unary and binary facts, and are being exploited in an increasing range of applications \cite{vrandevcic2014wikidata,suchanek2007yago,bouchard2015approximate,hamilton2017inductive}. 

Knowledge graphs are, however, often incomplete, which has led to the rapid development of the field of KG completion---the task of extending an incomplete KG with all missing facts holding in its (unknown) complete version. KG completion is typically conceptualised as a classification problem, where the aim is to learn a function that, given the incomplete KG and a candidate fact as input, decides whether the latter holds in the  completion of the former. A wide range of KG approaches have been proposed in the literature, including embedding-based approaches with distance-based scoring functions \cite{bordes2013translating,sun2018rotate}, tensor products \cite{nickel2011three,yang2015embedding}, 
box embeddings \cite{DBLP:conf/nips/AbboudCLS20},
recurrent neural networks \cite{sadeghian2019drum}, differentiable reasoning \cite{rocktaschel2017end,evans2018learning}, and LLMs \cite{yao2019kg,xie2022discrimination}.
Amongst all neural approaches to KG completion, however, those based on graph neural networks (GNNs) have received special attention \cite{ioannidis2019recurrent,liu2021indigo,pflueger2022gnnq}. These include R-GCN \cite{schlichtkrull2018modeling} and its  extensions \cite{tian2020ra,cai2019transgcn,vashishth2019composition,yu2021knowledge,shang2019end,liu2021indigo}, where the basic R-GCN model remains a common baseline for evaluating against or as part of a larger system \cite{gutteridge2023drew,liu2023revisiting,li2023skier,tang2024gadbench,zhang2023learning,lin2023fusing}.

Although these embedding and neural-based approaches to KG completion have proved effective in practice, 
their predictions are difficult to explain and interpret \cite{garnelo2019reconciling}. This is in contrast to logic-based and neuro-symbolic approaches to KG completion, such as rule learning methods, where the extracted rules can be used to generate rigourous proofs explaining the prediction of any given fact.
RuleN \cite{meilicke2018fine} and AnyBURL \cite{meilicke2018fine} heuristically identify Datalog rules from the given data and apply them directly for completing the input graph.  RNNLogic \cite{qu2020rnnlogic} uses a probabilistic model to select the most promising rules. Other works attempt to extract Datalog rules from trained neural models, including Neural-LP \cite{yang2017differentiable}, DRUM \cite{sadeghian2019drum}, and Neural Theorem Provers \cite{rocktaschel2017end}. As shown in \cite{cucala2022faithful,wang2023faithful}, however, the extracted Datalog rules are not faithful to the model in the sense that the rules may derive, for some dataset, facts that are not predicted by the model (\emph{unsoundness}) as well as failing to derive predicted facts (\emph{incompleteness}).

As a result, there is increasing interest in the development of neural KG completion methods whose predictions can be faithfully characterised by means of rule-based reasoning. As shown in \cite{cucala2022faithful,wang2023faithful}, the Neural-LP and DRUM approaches can be adapted to ensure soundness and completeness of the generated rules with respect to the model. The class of \emph{monotonic GNNs} \cite{cucala2021explainable}, which use max aggregation, restrict all weights in the model to be non-negative, and impose certain requirements on activation and classification functions, can be faithfully captured by means of tree-like Datalog programs.  In turn, \emph{monotonic max-sum GNNs} \cite{cucala2023correspondence}, which relax the requirements on the aggregation function to encompass both max and sum aggregation, can be faithfully characterised by means of tree-like Datalog programs with inequalities in the body of rules. Both variants of monotonic GNNs, however, require model weights to be non-negative, which is crucial to ensure that their application to datasets is monotonic under homomorphisms---a key property of (negation-free) Datalog reasoning.
The monotonicity requirement, however, 
is not applicable to popular GNN-based models such as R-GCN.



\paragraph{Our Contribution}
In this paper, we consider \emph{sum-GNNs}, which use sum as aggregation function and which do not impose restrictions on model weights. These GNNs can be seen both as an extension to the GNNs in \cite{cucala2023correspondence} with sum aggregation but without the monotonicity requirement, as well as a variant of the R-GCN model. The functions learnt by sum-GNNs may be non-monotonic: predicted facts may be invalidated by adding new facts to the input KG. As a result, these GNNs cannot, in general, be faithfully captured by (negation-free) Datalog programs. 
Our aim in this paper is to identify a subset of the output channels \changemarker{(i.e.\ features)} of the GNN which exhibit monotonic behaviour, and for which sound Datalog rules may be extracted. The ability to extract sound rules is important as it allows us to explain model predictions associated to the identified output channels. Furthermore, we provide means for identifying \emph{unbounded} output channels for which no sound Datalog rule exists, hence suggesting that these channels inherently exhibit non-monotonic behaviour.

We conducted experiments on the benchmark datasets by \cite{teru2020inductive} and also use the rule-based LogInfer evaluation framework described in \cite{liu2023revisiting}, which we extended to include a mixture of monotonic and non-monotonic rules. Our experiments show that even under ideal scenarios, without restrictions on the training process, all the channels in the trained GNN are unbounded (even for monotonic LogInfer benchmarks), which implies that there are no sound Datalog rules for the model. We then consider two adjustments to the training process where weights sufficiently close to zero (as specified by a given threshold) are clamped to zero iteratively during training. As the weight clamping threshold increases, we observe that an increasing number of output channels exhibit monotonic behaviour and we obtain more sound rules, although the model accuracy diminishes. Hence, there is a trade-off between model performance and rule extraction.
\section{Background} \label{sec:background}

\smallskip
\noindent
\textbf{Datalog}
We fix a signature of countably infinite, disjoint sets of predicates and
constants, where each predicate is associated with a non-negative integer arity. We also consider a
countably infinite set of variables disjoint with the sets of
predicates and constants. 
A term is a variable or a constant. An atom is an expression of the form $R(t_1, ..., t_n)$, where each $t_i$ is a term and $R$ is a predicate with arity $n$. 
A literal is an atom or any inequality $t_1 \not\approx t_2$. A literal is ground if it contains no variables. A fact is a ground atom and a dataset $D$ is a finite set of facts. A (Datalog) rule is an expression of the form
\begin{equation}
    B_1 \land ... \land B_n \rightarrow H, 
\label{eq:ruleform}
\end{equation} where  $B_1, ..., B_n$ are its body literals and $H$ is its head atom.
We use the standard safety requirements for rules: every variable that appears in a rule must occur in a body atom.
Furthermore, to avoid vacuous rules, we require that each inequality in the body of a rule mentions two different terms.
A (Datalog) program is a finite set of rules. A substitution $\nu$ maps finitely many variables to constants. For literal $\alpha$ and a substitution $\nu$ defined on each variable in $\alpha$, $\alpha \nu$ is obtained by replacing each occurrence of a variable $x$ in $\alpha$ with $\nu(x)$. For a dataset $D$ and a ground atom $B$, we write $D \models B$ if $B \in D$; furthermore, given constants $a_1$ and $a_2$, we write $D \models a_1 \not\approx a_2$ if $a_1 \neq a_2$, for uniformity. The immediate consequence operator $T_r$ for a rule $r$ of form  
\eqref{eq:ruleform} maps a dataset $D$ to dataset $T_r(D)$ containing $H \nu$ for each substitution $\nu$ such that $D \models B_i \nu$ for each $i \in \{1, \dots, n\}$. For a program $\mathcal{P}$, $T_{\mathcal{P}}(D) = \bigcup_{r \in \mathcal{P}} T_r(D)$.

\smallskip
\noindent
\textbf{Graphs}
We consider real-valued vectors and matrices. For $\mathbf{v}$ a vector and ${i > 0}$, $\mathbf{v}[i]$ denotes the $i$-th element of $\mathbf{v}$. For $\mathbf{A}$ a
matrix and ${i ,j> 0}$, $\mathbf{A}[i,j]$ denotes the element in row $i$ and column
$j$ of $\mathbf{A}$. A function ${\sigma : \mathbb{R} \to \mathbb{R}}$ is \emph{monotonically increasing} if
${x < y}$ implies ${\sigma(x) \leq \sigma(y)}$.
We apply functions to vectors element-wise.

For a finite set $\col$ of colours  and $\delta \in \mathbb{N}$, a ($\col$, $\delta$)-graph $G$ is a tuple $\langle V, \{ E^c \}_{c \in \text{$\col$}}, \lambda \rangle$ where $V$ is a finite vertex set, each $E^c \subseteq V \times V$ is a set of directed edges, and $\lambda$ assigns to each $v \in V$ a vector of dimension $\delta$. 
When $\lambda$ is clear from the context, we abbreviate the \changemarker{labelling} $\lambda(v)$ as $\mathbf{v}$.
Graph $G$ is undirected if $E^c$ is symmetric for each $c \in$ $\col$ and is Boolean if $\mathbf{v}[i] \in \{0, 1\}$ for each $v \in V$ and  $i \in \{ 1, ..., \delta \}$.

\smallskip
\noindent
\textbf{Graph Neural Networks}
We consider GNNs with sum aggregation.
A ($\col$,$\delta$)-sum graph neural network (sum-GNN) $\mathcal{N}$
with ${L \geq 1}$ layers is a tuple
\begin{equation}
\resizebox{.86\linewidth}{!}{$
    \langle \; \{ \mathbf{A}_\ell \}_{1 \leq \ell \leq L}, \; \{ \mathbf{B}^c_\ell\}_{c \in \text{$\col$}, 1 \leq \ell \leq L}, \; \{ \mathbf{b}_\ell \}_{1 \leq \ell \leq L}, \; \sigma_\ell, \; \texttt{cls}_t \; \rangle,  \label{eq:GNN}
$}
\end{equation}
where, for each ${\ell \in \{ 1, \dots, L \}}$ and ${c \in \col}$, matrices
$\mathbf{A}_\ell$ and $\mathbf{B}^c_\ell$ are of dimension
$\delta_\ell \times \delta_{\ell-1}$ with ${\delta_0 = \delta_L = \delta}$,
$\mathbf{b}_\ell$ is a vector of dimension $\delta_\ell$, $\sigma_\ell : \mathbb{R} \to \mathbb{R}^+ \cup \{ 0 \}$ is a monotonically increasing activation function with non-negative range, 
and ${\texttt{cls}_t : \mathbb{R} \to \{ 0,1 \}}$ for threshold $t \in \mathbb{R}$ is a step classification function such that $\texttt{cls}_t(x) = 1$ if  $ x \geq t$ and $\texttt{cls}_t(x) = 0$ otherwise.

Applying $\mathcal{N}$ to a $(\col,\delta)$-graph induces a sequence of labels $\mathbf{v}_0, \mathbf{v}_1, ..., \mathbf{v}_L$ for each vertex $v$ in the graph as follows. First,
$\mathbf{v}_0$ is the initial labelling of the input graph; then, for each $1 \leq \ell \leq L$, $\mathbf{v}_\ell$ is defined by the following expression:
\begin{align} \label{align:def_gnn_update}
\mathbf{v}_\ell = \sigma_\ell ( \mathbf{b}_\ell + \mathbf{A}_\ell \mathbf{v}_{\ell - 1} + \sum_{c \in \text{$\col$}} \mathbf{B}_\ell^c \sum_{(v, u) \in E^c} \mathbf{u}_{\ell - 1})
\end{align}
The output of $\mathcal{N}$ is a (\col,$\delta$)-graph with the same vertices and edges as the input graph, but where each vertex is labelled by $\texttt{cls}_t(\mathbf{v}_L)$. For layer $\ell \in \{ 0, ..., L \}$ of $\mathcal{N}$, each $i \in \{ 1, ..., \delta_\ell \}$ is referred to as a \emph{channel}.

The R-GCN model \cite{schlichtkrull2018modeling} can be seen as a sum-GNN variant with ReLU activations and zero biases in all layers. The definition in  
\cite{schlichtkrull2018modeling} includes a normalisation parameter $c_{i, r}$ (cf.\ their Equation (2)), which can depend on a predicate $r$ and/or vertex $i$ under special consideration when applying the GNN. A dependency on the vertex implies that the values of these parameters are data-dependent---that is, they are computed at test time based on the concrete graph over which the GNN is evaluated (e.g.,  $c_{i, r}$ could be computed as the number of $r$-neighbours of vertex $i$).
Sum-GNNs capture the R-GCN model under the assumption that the normalisation parameters are data-independent and hence can be fixed after training; we further assume for simplicity that they are set to $1$.

\smallskip
\noindent
\textbf{Dataset Transformations Through sum-GNNs}
A sum-GNN $\mathcal{N}$ can be used to realise a transformation
$T_{\mathcal{N}}$ from datasets to datasets over a given finite signature \cite{cucala2023correspondence}. To this end, the input dataset must be 
first encoded into a graph that can be directly processed by the sum-GNN, and the graph resulting from the sum-GNN application must be subsequently decoded back into an output dataset. Several
encoding/decoding schemes have been proposed in the literature. 
We adopt the so-called \emph{canonical scheme}, which is a straightforward way of converting datasets to coloured graphs. In particular, colours in graphs correspond  to binary predicates in the signature and channels of feature vectors in the input and output layers of the sum-GNN  to unary predicates. For each $p \in \{1, \dots, \delta\}$,
we  denote the unary predicate corresponding to channel $p$ as $U_p$.
More precisely, the canonical encoding $\texttt{enc}(D)$ of a dataset $D$
is the Boolean $(\text{$\col$},\delta)$-graph with a vertex $v_a$ for each constant $a$ in $D$ and a $c$-coloured edge $(v_a,v_b)$ for each
fact $R^c(a,b) \in D$. Furthermore, given a vertex $v_a$ corresponding to constant $a$, vector component $\mathbf{v}_a[p]$ is set to $1$ if and only if $U_p(a) \in D$, for $p \in \{1, \dots, \delta\}$.
The decoder $\texttt{dec}$ is the inverse of the encoder.
The \emph{canonical} dataset transformation induced by a sum-GNN $\mathcal{N}$ is then defined as: 
$T_{\mathcal{N}}(D) = \texttt{dec}(\mathcal{N}(\texttt{enc}(D))) .$
We abbreviate $\mathcal{N}(\texttt{enc}(D))$ by $\mathcal{N}(D)$.

\smallskip
\noindent
\textbf{Soundness and Completeness}
A Datalog program or rule $\alpha$ is sound for a 
sum-GNN $\mathcal{N}$ if $T_\alpha(D) \subseteq T_{\mathcal{N}}(D)$ for each dataset $D$.
Conversely, $\alpha$ is complete for $\mathcal{N}$ if $T_{\mathcal{N}}(D) \subseteq T_\alpha(D)$ for each dataset $D$.
Finally, we say that $\alpha$ is equivalent to $\mathcal{N}$ if it is both sound and complete for $\mathcal{N}$.
\changemarker{
The following proposition establishes that the containment relation that defines soundness for a rule or program still holds when the operators are composed a finite number of times.
}

\begin{proposition} \label{prop:sound_finite_containment}
\changemarker{
If $\alpha$ is a rule or program sound for sum-GNN $\mathcal{N}$, then for any dataset $D$ and $k \in \mathbb{N}$, the containment holds when $T_\alpha$ and $T_\mathcal{N}$ are composed $k$ times:
$T_\alpha^k(D) \subseteq T_{\mathcal{N}}^k(D)$.
}
\end{proposition}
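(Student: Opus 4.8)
The plan is to proceed by induction on $k$. The base case $k=0$ is immediate, since $T_\alpha^0(D) = D = T_{\mathcal{N}}^0(D)$, and the case $k=1$ is exactly the definition of soundness. For the inductive step, I assume $T_\alpha^k(D') \subseteq T_{\mathcal{N}}^k(D')$ holds for every dataset $D'$, and I aim to establish $T_\alpha^{k+1}(D) \subseteq T_{\mathcal{N}}^{k+1}(D)$.

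The naive attempt is to write $T_\alpha^{k+1}(D) = T_\alpha(T_\alpha^k(D))$, bound this above by $T_{\mathcal{N}}(T_\alpha^k(D))$ using soundness, and then try to push the inclusion $T_\alpha^k(D) \subseteq T_{\mathcal{N}}^k(D)$ through the \emph{outer} application of $T_{\mathcal{N}}$. This does not work: it would require $T_{\mathcal{N}}$ to be monotone under subset inclusion, whereas the whole premise of the paper is that sum-GNNs are in general \emph{non}-monotonic. The key idea is therefore to route the induction through the monotonicity of $T_\alpha$ instead. Concretely, I would write $T_\alpha^{k+1}(D) = T_\alpha(T_\alpha^k(D))$, apply monotonicity of $T_\alpha$ together with the inductive hypothesis to get $T_\alpha(T_\alpha^k(D)) \subseteq T_\alpha(T_{\mathcal{N}}^k(D))$, and then apply soundness of $\alpha$ to the dataset $T_{\mathcal{N}}^k(D)$ to obtain $T_\alpha(T_{\mathcal{N}}^k(D)) \subseteq T_{\mathcal{N}}(T_{\mathcal{N}}^k(D)) = T_{\mathcal{N}}^{k+1}(D)$. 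Chaining these inclusions yields the claim, and the same argument handles both the single-rule and the program case since $T_{\mathcal{P}} = \bigcup_{r \in \mathcal{P}} T_r$.

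The one ingredient requiring justification, and which I regard as the crux, is the monotonicity of $T_\alpha$: if $D_1 \subseteq D_2$, then $T_\alpha(D_1) \subseteq T_\alpha(D_2)$. For a single rule this follows by verifying that every substitution $\nu$ firing on $D_1$ also fires on $D_2$, so that the corresponding head $H\nu$ lies in $T_\alpha(D_2)$ as well. A body atom satisfied in $D_1$ is, as a fact, still present in the superset $D_2$; the only delicate point is the presence of inequality literals $t_1 \not\approx t_2$ in rule bodies. But by the definition $D \models a_1 \not\approx a_2$ iff $a_1 \neq a_2$, satisfaction of a ground inequality depends only on the constants involved and not on the dataset, hence it is preserved when passing from $D_1$ to $D_2$. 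Thus every firing substitution is preserved, establishing monotonicity for a rule, and taking unions extends it to an arbitrary program. With this lemma available, the induction above closes and the proposition follows.
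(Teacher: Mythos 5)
Your proposal is correct and takes essentially the same route as the paper's own proof: induction on $k$, chaining $T_\alpha^{k}(D) \subseteq T_\alpha(T_{\mathcal{N}}^{k-1}(D)) \subseteq T_{\mathcal{N}}^{k}(D)$ via the monotonicity of $T_\alpha$ and soundness applied to the dataset $T_{\mathcal{N}}^{k-1}(D)$, thereby deliberately avoiding any monotonicity assumption on $T_{\mathcal{N}}$. The only difference is that you spell out why $T_\alpha$ is monotonic (including the observation that ground inequalities are satisfied independently of the dataset), a step the paper simply asserts; this is a harmless and indeed welcome elaboration rather than a different argument.
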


\smallskip
\noindent
\textbf{Link Prediction} 
The link prediction task assumes a given incomplete dataset $D$ and an (unknown) completion $D^*$ of $D$ containing all missing \emph{binary} facts that are considered true 
over the predicates and constants of $D$.
Thus, given a fact $\alpha$ involving a binary predicate and constants from $D$, the task is to predict whether $\alpha \in D^*$.
\changemarker{To perform link prediction with a sum-GNN $\mathcal{N}$, we use the (non-canonical) encoding and decoding from 
\cite{cucala2021explainable},
which differs from the canonical encoding in that
vertices can now also encode pairs of constants, so that both unary and binary facts can be
represented in the channels of the input and output layers. This allows us to derive new binary facts (which the canonical transformation cannot do).
As shown in Section 3.2 of \cite{cucala2023correspondence},
this non-canonical encoding can be expressed as the composition of a Datalog program
$\mathcal{P}_\texttt{enc}$ and the canonical encoding; similarly, the decoding can be seen as the composition of the canonical decoding followed by the application of a Datalog program $\mathcal{P}_\texttt{dec}$. Hence, we  can use these programs to lift any rule extraction results obtained for the canonical transformation $T_{\mathcal{N}}$ to the end-to-end transformation 
$\mathcal{P}_\texttt{dec}(T_{\mathcal{N}}(\mathcal{P}_\texttt{enc}(D)))$.
}

\section{Partitioning the Channels of a GNN} \label{sec:sound}

In this section, we first provide two approaches for identifying channels in a sum-GNN that exhibit monotonic behaviour, \changemarker{which will later allow us to extract sound Datalog rules with head predicates corresponding to these channels.}
Candidate Datalog rules 
can be effectively checked for soundness using the approach developed by \cite{cucala2023correspondence} for monotonic GNNs.
Furthermore, we provide an approach for identifying channels for which no sound Datalog rule exists. 
Our techniques are data-independent and rely on direct analysis of the dependencies between feature vector components via the parameters of the model.

\subsection{Safe Channels}\label{sec:safe}

In this section, we introduce the notion of a safe channel of a sum-GNN $\mathcal{N}$. Intuitively, a channel is \emph{safe} if, for any dataset $D$, the computation of its value for each vertex $v \in \texttt{enc}(D)$
through the application of $\mathcal{N}$ to $D$ is affected only by non-negative values in the weight matrices of the GNN.

For instance, consider a simple sum-GNN where matrix  $\mathbf{A}_{\ell}$ for layer $\ell$ is given below 
$$
\mathbf{A}_{\ell} = \begin{pmatrix}
1 & 0 & 1 & 4 \\
1 & 0 & 0 & 2 \\
-8 & -1 & 0 & -2 \\
\end{pmatrix}
$$
and each of the matrices $\mathbf{B}_{\ell}^c$ for $c \in \col$ contain only zeroes. Furthermore, assume that \changemarker{layer $\ell-1$ has four channels, where the \emph{third} one has been identified as unsafe and all other channels have been identified as safe. Then, for an arbitrary vertex $v$, the product of $\mathbf{A}_{\ell}$ and  $\mathbf{v}_{\ell-1}$ in the computation of $\mathbf{v}_{\ell}$
reveals that the second channel in $\ell$ is safe, because
the unsafe component in $\mathbf{v}_{\ell-1}$ is multiplied by zero ($\mathbf{A}_\ell[2,3]$),
and the safe components are multiplied by non-negative matrix values.
In contrast, the first channel of $\ell$ is unsafe since its computation involves the product of an unsafe component of $\mathbf{v}_{\ell-1}$ with a non-zero matrix component ($\mathbf{A}_\ell[1,3]$), and the third channel is unsafe due to the product of a component of $\mathbf{v}_{\ell-1}$ with a negative matrix value (e.g.\ $\mathbf{A}_\ell[3,1]$).}
The following definition generalises this intuition.

\begin{definition} \label{def:nabn}
Let $\mathcal{N}$ be a sum-GNN as in \Cref{eq:GNN}.
All channels 
$i \in \{ 1, ..., \delta_0 \}$ are \emph{safe}
at layer $\ell =0$. Channel $i \in \{1, \ldots, \delta_{\ell}\}$ is \emph{safe} at layer  $\ell \in \{1, \ldots, L\}$ if the $i$-th row of each matrix $\mathbf{A}_\ell$ and $\{(\mathbf{B}_\ell^c)\}_{c \in \text{$\col$}}$ contains only non-negative values and, additionally, the $j$-th element in each such row is zero for each $j \in \{1, \dots, \delta_{\ell-1}\}$ such that  $j$ is unsafe in layer $\ell-1$. Otherwise, $i$ is \emph{unsafe}.
\end{definition}

We can now show that safe channels in a GNN exhibit monotonic behaviour. In particular, the value of a safe channel may only increase (or stay the same) when new facts are added to the input dataset.

\begin{lemma} \label{lemma:nabn_monotonic}
Let $\mathcal{N}$ be a sum-GNN as in \Cref{eq:GNN}, 
let $D', D$ be datasets satisfying $D' \subseteq D$, let $v \in \texttt{enc}(D')$, and let  $\mathbf{v}_\ell$ and  $\mathbf{v}_\ell'$ be the vectors associated to $v$ in layer $\ell$ upon applying $\mathcal{N}$ to $D$ and $D'$ respectively.
If channel $i$ in $\mathcal{N}$ is safe at layer $\ell$, then $\mathbf{v}_\ell'[i] \leq \mathbf{v}_\ell[i]$.
\end{lemma}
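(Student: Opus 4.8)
The plan is to prove a slightly stronger statement by induction on the layer $\ell$: for \emph{every} vertex $w \in \texttt{enc}(D')$ and every channel $i$ that is safe at layer $\ell$, the inequality $\mathbf{w}_\ell'[i] \leq \mathbf{w}_\ell[i]$ holds. Strengthening the claim to quantify over all vertices of $\texttt{enc}(D')$, rather than the single vertex $v$, is essential: the update rule in \Cref{align:def_gnn_update} aggregates over the neighbours of $v$, so the inductive hypothesis must also be available for those neighbours. Note that every vertex of $\texttt{enc}(D')$ is also a vertex of $\texttt{enc}(D)$ since $D' \subseteq D$, so both $\mathbf{w}_\ell'$ and $\mathbf{w}_\ell$ are well-defined. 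For brevity I write $E^c(D')$ and $E^c(D)$ for the $c$-coloured edge sets of $\texttt{enc}(D')$ and $\texttt{enc}(D)$, and observe that $D' \subseteq D$ gives $E^c(D') \subseteq E^c(D)$ for each colour $c$.

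For the base case $\ell = 0$, all channels are safe by \Cref{def:nabn}. The canonical encoding sets $\mathbf{w}_0[i] = 1$ exactly when $U_i(a) \in D$, where $a$ is the constant represented by $w$; since $D' \subseteq D$, the fact $U_i(a) \in D'$ entails $U_i(a) \in D$, and as the labels are Boolean we obtain $\mathbf{w}_0'[i] \leq \mathbf{w}_0[i]$.

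For the inductive step, I would fix a channel $i$ safe at layer $\ell$ and expand the $i$-th component of \Cref{align:def_gnn_update}. By \Cref{def:nabn}, the $i$-th rows of $\mathbf{A}_\ell$ and each $\mathbf{B}_\ell^c$ are non-negative and vanish on every channel $j$ that is unsafe at layer $\ell-1$; hence only safe channels $j$ contribute to the argument of $\sigma_\ell$, and they do so with non-negative coefficients. For the self term $\sum_j \mathbf{A}_\ell[i,j]\,\mathbf{w}_{\ell-1}[j]$, the inductive hypothesis gives $\mathbf{w}_{\ell-1}'[j] \leq \mathbf{w}_{\ell-1}[j]$ for each contributing safe $j$, and multiplying by the non-negative coefficient $\mathbf{A}_\ell[i,j]$ preserves this inequality termwise.

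The hard part will be the neighbour term, where two distinct sources of growth must be handled at once: the feature vectors of neighbours may increase from $D'$ to $D$, and the neighbourhood itself may grow since $E^c(D') \subseteq E^c(D)$. I would control this in two steps. First, applying the inductive hypothesis vertex-by-vertex over the fixed edge set $E^c(D')$ yields $\sum_{(w,u)\in E^c(D')} \mathbf{u}_{\ell-1}'[j] \leq \sum_{(w,u)\in E^c(D')} \mathbf{u}_{\ell-1}[j]$. Second, enlarging the edge set from $E^c(D')$ to $E^c(D)$ can only add further summands, and here the crucial observation is that these summands $\mathbf{u}_{\ell-1}[j]$ are non-negative: for $\ell-1 \geq 1$ this follows from the non-negative range of $\sigma_{\ell-1}$, and for $\ell-1 = 0$ from the Boolean encoding. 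Hence $\sum_{(w,u)\in E^c(D')} \mathbf{u}_{\ell-1}[j] \leq \sum_{(w,u)\in E^c(D)} \mathbf{u}_{\ell-1}[j]$. Combining the self and neighbour estimates (both scaled by non-negative coefficients, with the bias terms identical) shows that the total argument of $\sigma_\ell$ for $D'$ does not exceed that for $D$; since $\sigma_\ell$ is monotonically increasing, the desired inequality $\mathbf{w}_\ell'[i] \leq \mathbf{w}_\ell[i]$ follows. Taking $w = v$ and the given channel $i$ recovers the statement of the lemma.
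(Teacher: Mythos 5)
Your proposal is correct and follows essentially the same route as the paper's proof: induction on the layer, with the self term handled by non-negative coefficients restricted to safe channels, and the neighbour term handled by combining the inductive hypothesis over all vertices with edge-set monotonicity and the non-negativity of feature values. Your explicit strengthening of the induction hypothesis to all vertices, and your justification that the extra summands from new edges are non-negative (via the activation's non-negative range and the Boolean encoding at layer $0$), merely make precise two steps the paper treats implicitly.
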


\begin{proof}
We proceed by induction on $\ell$. The base case $\ell =0$
holds trivially by the definition of the canonical encoding and the fact that $D' \subseteq D$.
For the inductive step, assume that the claim holds for $\ell -1\geq 0$ and that channel $i$ is safe at layer $\ell$. We show that $\mathbf{v}_\ell'[i] \leq \mathbf{v}_\ell[i]$.
Consider the computation of $\mathbf{v}_\ell[i]$ and $\mathbf{v}_\ell'[i]$  as per  \Cref{align:def_gnn_update}. The value $(\mathbf{A}_\ell \mathbf{v}_{\ell-1})[i]$ is the sum over all $j \in \{ 1, ..., \delta_{\ell-1} \}$ of $\mathbf{A}_\ell[i, j] \mathbf{v}_{\ell-1}[j]$. By \Cref{def:nabn}, this sum involves only non-negative values and can be restricted to values of $j$ corresponding to safe channels at layer $\ell-1$.
By induction, each such safe $j$ satisfies $\mathbf{v}_{\ell-1}'[j] \leq \mathbf{v}_{\ell-1}[j]$ and hence  $\mathbf{A}_\ell[i, j] \mathbf{v}_{\ell-1}'[j] \leq \mathbf{A}_\ell[i, j] \mathbf{v}_{\ell-1}[j]$; thus, $(\mathbf{A}_\ell \mathbf{v}_{\ell-1}')[i] \leq (\mathbf{A}_\ell \mathbf{v}_{\ell-1})[i].$

Consider now  $c \in \text{$\col$}$ and let $E^c$ and $(E^c)'$ be the $c$-coloured edges in $\texttt{enc}(D)$ and  $\texttt{enc}(D')$ respectively. The sums involved in the products $(\mathbf{B}_\ell^c \sum_{(v,u) \in E^c} \mathbf{u}_{\ell-1} )[i]$ and $(\mathbf{B}_\ell^c \sum_{(v,u) \in (E^c)'} \mathbf{u}_{\ell-1}' )[i]$ contain only non-negative values and
can similarly be  restricted to safe channels at layer $\ell-1$. By induction, each such safe $j$ satisfies that $\forall u \in \texttt{enc}(D'),~ \mathbf{u}_{\ell-1}'[j] \leq \mathbf{u}_{\ell-1}[j]$. Furthermore, $(E^c)' \subseteq E^c$ given that $D' \subseteq D$. We conclude that
$$ (\sum_{c \in \text{$\col$}} \mathbf{B}_\ell^c \sum_{(v,u) \in (E^c)'} \mathbf{u}'_{\ell-1} )[i] \leq (\sum_{c \in \text{$\col$}} \mathbf{B}_\ell^c \sum_{(v,u) \in E^c} \mathbf{u}_{\ell-1} )[i] . $$


By combining the previous inequalities and taking into account that $\sigma_\ell$ is monotonically increasing, we conclude that $\mathbf{v}_\ell'[i] \leq \mathbf{v}_\ell[i]$, as required.

\end{proof}

We conclude this section by showing that the identification of safe channels \changemarker{allows for} the extraction of sound rules from a trained sum-GNN model. In particular, given a candidate Datalog rule, it is possible to algorithmically verify its soundness using the approach of \cite{cucala2023correspondence}. The soundness check for a rule $r$ of the form \eqref{eq:ruleform} involves considering an arbitrary (but fixed) set containing as many constants as variables in $r$, and considering each  substitution $\nu$ mapping variables in $r$ to these constants and satisfying the inequalities in $r$. For $D_r^{\nu}$, the dataset consisting of each fact $B_i\nu$ such that $B_i$ is a body atom in $r$, we check whether the GNN predicts the fact $H\nu$, corresponding to the grounding of the head atom. If this holds for each considered $\nu$, then the rule is sound; otherwise, the substitution $\nu$ for which it does not hold provides a counter-example to soundness.

\begin{proposition} \label{prop:nabn_sound}
Let $\mathcal{N}$ be a sum-GNN as in \Cref{eq:GNN}, and let $r$  be a rule of the form \eqref{eq:ruleform} where $H$ mentions a unary predicate $U_p$. Let $S$ be an arbitrary set of as many constants as there are variables in $r$.
Assume channel $p$ in $\mathcal{N}$ is safe at layer $L$.
Then $r$ is sound for $T_\mathcal{N}$ if and only if
$H\nu \in T_\mathcal{N}(D_r^{\nu})$
for each substitution $\nu$ mapping the variables of $r$ to constants in $S$ and such that $D_r^{\nu} \models B_i\nu$ for each inequality $B_i$ in r.
\end{proposition}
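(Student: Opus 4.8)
The plan is to prove the two implications of the biconditional separately. The left-to-right direction is immediate from the definition of soundness, so the substance lies in the right-to-left direction, where I would combine the monotonicity guaranteed by \Cref{lemma:nabn_monotonic} with the invariance of $T_\mathcal{N}$ under renaming of constants. For the forward direction, assuming $r$ is sound, I fix any substitution $\nu$ of the variables of $r$ into $S$ that satisfies every inequality of $r$. By construction of $D_r^{\nu}$ we have $B_i\nu \in D_r^{\nu}$ for each body atom $B_i$, so $D_r^{\nu} \models B_i\nu$; together with the assumption on the inequalities, this makes $\nu$ a firing substitution for $r$ on $D_r^{\nu}$, whence $H\nu \in T_r(D_r^{\nu})$. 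Soundness then gives $H\nu \in T_\mathcal{N}(D_r^{\nu})$, which is exactly the finite condition.

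For the backward direction, I assume the finite check holds and fix an arbitrary dataset $D$ and substitution $\mu$ witnessing $H\mu \in T_r(D)$, aiming to show $H\mu \in T_\mathcal{N}(D)$. Writing $H = U_p(x)$ and $a = \mu(x)$, so that $H\mu = U_p(a)$, I would argue via two reductions. The first restricts attention to $D_r^{\mu} = \{ B_i\mu \mid B_i \text{ a body atom of } r \}$. Since $D \models B_i\mu$ forces $B_i\mu \in D$, we have $D_r^{\mu} \subseteq D$, and by the safety requirement on rules the variable $x$ occurs in some body atom, so $a$ appears in $D_r^{\mu}$ and $v_a \in \texttt{enc}(D_r^{\mu})$. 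Because channel $p$ is safe at layer $L$, \Cref{lemma:nabn_monotonic} applied to $D_r^{\mu} \subseteq D$ yields that the value of channel $p$ at $v_a$ computed on $D_r^{\mu}$ is at most its value computed on $D$; since $\texttt{cls}_t$ is a threshold, it therefore suffices to prove $U_p(a) \in T_\mathcal{N}(D_r^{\mu})$, as this transfers to $D$.

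The second reduction connects $D_r^{\mu}$ to one of the finitely many checked instances. The substitution $\mu$ induces a partition of the terms of $r$ (two terms in the same block exactly when $\mu$ maps them equally), and since $\mu$ satisfies every inequality this partition keeps the two sides of each inequality apart. As the number of blocks is at most the number of variables of $r$, which equals $|S|$, I can choose a substitution $\nu$ of the variables into $S$ realising the same partition; then $\nu$ satisfies all inequalities of $r$, and $\mu(y) \mapsto \nu(y)$ is a well-defined bijection between the constants of $D_r^{\mu}$ and those of $D_r^{\nu}$ sending each fact $B_i\mu$ to $B_i\nu$. Thus $D_r^{\nu}$ is a renaming of $D_r^{\mu}$, and since $T_\mathcal{N}$ depends only on the isomorphism type of the encoded graph it commutes with such renamings. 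The finite check then gives $H\nu = U_p(\nu(x)) \in T_\mathcal{N}(D_r^{\nu})$, and transporting back along the renaming yields $U_p(a) \in T_\mathcal{N}(D_r^{\mu})$, which by the first reduction gives $H\mu \in T_\mathcal{N}(D)$.

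The main obstacle I anticipate is making the two reductions interact cleanly: the restriction step relies crucially on the safety of channel $p$, without which monotonicity fails and passing from $D_r^{\mu}$ back to $D$ would be unjustified, while the renaming step relies on the invariance of $T_\mathcal{N}$ under graph isomorphism, a property that the canonical encoding and decoding make intuitive but that I would state explicitly rather than leave implicit. I would also take care to verify that the partition induced by $\mu$ can always be realised within $S$ while respecting the inequalities, and to handle the degenerate cases where the head term $x$ is a constant or where $r$ contains no variables.
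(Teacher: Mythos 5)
Your proof is correct and takes essentially the same approach as the paper: the paper also dispatches the ``only if'' direction by observing $H\nu \in T_r(D_r^\nu)$, and proves the ``if'' direction by composing $\mu$ with an injective map into $S$ (your partition-realising $\nu$ is exactly this composition), invoking invariance of $T_\mathcal{N}$ under one-to-one renamings of constants, and applying \Cref{lemma:nabn_monotonic} to $D_r^\mu \subseteq D$. The only cosmetic difference is the order of your two reductions---the paper renames first and applies monotonicity last---which does not change the argument.
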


\begin{proof}
To prove the soundness of $r$, consider an arbitrary dataset $D$. We show that $T_r(D) \subseteq T_\mathcal{N}(D)$. To this end, we consider an arbitrary fact in $T_r(D)$ and show that it is also contained in 
$T_\mathcal{N}(D)$. By the definition of the immediate consequence operator $T_r$, this fact is of the form $H\mu$, where $\mu$ is a substitution from the variables of $r$ to constants 
in $D$ satisfying $D \models B_i\mu$ for each body literal $B_i$ of $r$. Let 
$\sigma$ be an arbitrary one-to-one mapping from the co-domain of $\mu$ to some subset of $S$; such a mapping exists because $S$ has as least as many constants as variables in $r$. Let $\nu$ be the composition of $\mu$ and $\sigma$.

Observe that for each body inequality $B_i$ of $r$, we have $D_r^{\nu} \models B_i \nu$
because $D \models B_i \mu$
and $\sigma$ is injective. 
Therefore, by the assumption of the proposition, $H\nu \in T_{\mathcal{N}}(D_r^{\nu})$.
Now, observe that the result of applying $T_{\mathcal{N}}$ to a dataset does not depend on the identity of the constants, but only on the structure of the dataset; therefore, $T_{\mathcal{N}}$ is invariant under one-to-one mappings of constants, and since $\sigma$ is one such map, $H\nu \in T_{\mathcal{N}}(D_r^{\nu})$ implies
$H\mu \in T_{\mathcal{N}}(D_r^{\mu})$.
Now, let $a$ be the single constant in $H \mu$.
Since $D^r_{\mu} \subseteq D$ by definition of $\mu$, and channel $p$ is safe at layer $L$, 
we can apply \Cref{lemma:nabn_monotonic} to conclude that $\mathbf{v}_L'[p] \leq \mathbf{v}_L[p]$, for $v$ the vertex corresponding to $a$ in $\texttt{enc}(D_r^{\mu})$, and $\mathbf{v}'_L$ and $\mathbf{v}_L$ the feature vectors in layer $L$ computed for $v$ by $\mathcal{N}$ on $D^{\mu}_r$ and $D$, respectively. But $H \mu \in T_\mathcal{N}(D^\mu_r)$ implies that $\texttt{cls}_t(\mathbf{v}_L'[p])=1$ and so 
$\mathbf{v}_L'[p] \geq t$.
Hence, $\mathbf{v}_L[p] \geq t$, and so
$\texttt{cls}_t(\mathbf{v}_L[p])=1$,
which implies that $H \mu \in T_\mathcal{N}(D)$, as
we wanted to show.

If on the other hand, if $H\nu \not \in T_\mathcal{N}(D_r^\nu)$ for some substitution $\nu$ defined as in the proposition, then $T_r(D_r^\nu) \not\subseteq T_\mathcal{N}(D_r^\nu)$, as $H\nu \in T_r(D_r^\nu)$. Thus, $r$ is unsound for $T_\mathcal{N}$.
\end{proof}

The identification of safe channels provides a sufficient condition for monotonic behaviour \changemarker{that can be easily computed in practice} and enables rule extraction. The fact that a channel is unsafe, however, does not imply that it behaves non-monotonically.
\changemarker{In the following section, we provide a more involved analysis of the dependencies between channels and model parameters which yields a more fine-grained channel classification. In particular, we identify two new classes of channels that behave monotonically, such that their union contains all safe channels, but may also contain unsafe channels.}

\subsection{Stable and Increasing Channels}\label{sec:stable}

In this section, we provide a classification of the channels of the GNN depending on their behaviour under updates involving the addition of new facts to an input dataset. 
Intuitively,  \emph{stable} channels are those whose value always remains unaffected by such updates; in turn, \emph{increasing} channels cannot decrease in value, whereas \emph{decreasing} channels cannot increase in value. All remaining channels are categorised as \emph{undetermined}.
Consider again a sum-GNN where matrix  $\mathbf{A}_{\ell}$ for layer $\ell$ is given below and each of the matrices $\mathbf{B}_{\ell}^c$ for $c \in \col$ contain only zeroes.
$$
\mathbf{A}_\ell = \begin{pmatrix}
1 & 0 & -1 & -2 \\
2 & 1 & -3 & 0 \\
1 & 0 & 1 & 2 \\
-3 & 0 & 2 & 0 \\
\end{pmatrix}
$$
 \changemarker{Furthermore, assume again that layer $\ell-1$
has four channels, where the first has been identified as increasing, the second one as undetermined, the third one as decreasing, and the fourth one as stable.
For an arbitrary vertex $v$,
the product of $\mathbf{A}_{\ell}$ and $\mathbf{v}_{\ell-1}$
in the computation of $\mathbf{v}_{\ell}$ 
reveals that the first channel of $\ell$ is increasing 
since the undetermined component of $\mathbf{v}_{\ell-1}$ is multiplied by $0$, the increasing component by a positive matrix value, and the decreasing component by a negative matrix value}. The second channel is undetermined, since it involves the product of an undetermined component with a non-zero matrix value \changemarker{($\mathbf{A}_\ell[1,1]$)}.
The third channel is also undetermined, since it is a mixture of increasing and decreasing values: positive-times-increasing is increasing, whereas positive-times-decreasing is decreasing, so it cannot be known a-priori whether the sum of these values will increase or decrease.
Finally, the fourth channel of $\ell$ is decreasing 
since the undetermined component of $\mathbf{v}_{\ell-1}$ is multiplied by $0$, and increasing (resp.\ decreasing) components are multiplied by negative (resp.\ positive) matrix values.

The following definition formalises these intuitions and extends the analysis to the matrices $\mathbf{B}_{\ell}^c$ involved in neighbourhood aggregation.

\begin{definition} \label{def:updown}
Let $\mathcal{N}$ be a sum-GNN as in Equation \eqref{eq:GNN}. 
All channels are \emph{increasing} at layer $0$, 
A channel $i \in \{1, \dots, \delta_\ell\}$ is \emph{stable} at layer $\ell \in \{1, ..., L\}$ if both of the following conditions hold for each $j \in \{1, \dots, \delta_{\ell-1}\}$:
\begin{itemize}
 \item $\mathbf{B}^c_{\ell}[i,j] = 0$ for each  $c \in \col$, and
 \item $\mathbf{A}_{\ell}[i,j] \neq 0$ implies that $j$ is stable in layer $\ell-1$.
\end{itemize}
It is \emph{increasing} (resp.\ \emph{decreasing}) at layer $\ell$ if it is not stable and, for each $j \in \{1, \dots, \delta_{\ell-1}\}$, these conditions hold:
\begin{enumerate}
        \item if $j$ is increasing in $\ell-1$, 
        then $\mathbf{A}_{\ell}[i,j] \geq 0$ (resp.\ $\mathbf{A}_{\ell}[i,j] \leq 0$);
        \item if $j$ is decreasing in $\ell-1$, then $\mathbf{A}_{\ell}[i,j] \leq 0 $  (resp.\ $\mathbf{A}_{\ell}[i,j] \geq 0 $) and $\mathbf{B}^c_{\ell}[i,j] = 0$ for each $c \in \col$; 
        \item if $j$ is undetermined in $\ell-1$, then $\mathbf{A}_{\ell}[i,j] = 0$ and
        ${\mathbf{B}^c_{\ell}[i,j] = 0}$ for each $c \in \col$; and
        \item $\mathbf{B}^c_{\ell}[i,j] \geq 0$ (resp.\ $\mathbf{B}^c_{\ell}[i,j] \leq 0$) for each $c \in \col$.
\end{enumerate}
It is \emph{undetermined} at layer $\ell$ if it is neither stable, increasing, nor decreasing. 
\end{definition}

Note that a channel cannot be both
increasing and decreasing, because
satisfying the conditions for both classes would imply that it is stable, 
which is incompatible with being increasing or decreasing. 

The following lemma shows that the behaviour of the channel types aligns with their intended interpretation.

\begin{lemma} \label{lemma:updown_monotonic}
Let $\mathcal{N}$ be a sum-GNN of $L$ layers, and let $D', D$ be datasets satisfying $D' \subseteq D$.
For each vertex $v \in \texttt{enc}(D')$, layer $\ell \in \{0, \dots, L\}$, and channel $i \in \{1, \dots, \delta_{\ell}\}$,
the following hold:
\begin{itemize}
\item If $i$ is stable at layer $\ell$, then $\mathbf{v}_\ell'[i] = \mathbf{v}_\ell[i]$;
\item If $i$ is increasing at layer $\ell$, then $\mathbf{v}_\ell'[i] \leq \mathbf{v}_\ell[i]$, and 
\item If $i$ is decreasing at layer $\ell$, then $\mathbf{v}_\ell'[i] \geq \mathbf{v}_\ell[i]$,
\end{itemize}
where $\mathbf{v}_\ell$ and $\mathbf{v}_\ell'$ are the vectors induced for $v$ in layer $\ell$ by applying $\mathcal{N}$ to $D$ and $D'$ respectively.
\end{lemma}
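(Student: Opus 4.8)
The plan is to establish all three statements simultaneously by induction on the layer $\ell$, following the same template as the proof of \Cref{lemma:nabn_monotonic} but now tracking the sign interactions dictated by \Cref{def:updown}. The simultaneous treatment is essential: the inductive step for a channel of one type at layer $\ell$ consumes the induction hypothesis for channels of \emph{all four} types at layer $\ell-1$, so the three claims cannot be proved in isolation. For the base case $\ell = 0$, every channel is increasing, and the canonical encoding together with $D' \subseteq D$ gives $\mathbf{v}_0'[i] \leq \mathbf{v}_0[i]$ for each $i$, matching the increasing claim; there are no stable or decreasing channels at layer $0$, so those claims hold vacuously. I would also record at this point that, since each $\sigma_\ell$ has non-negative range and the encoding is Boolean, every feature value $\mathbf{u}_\ell[j]$ is non-negative, a fact needed for the aggregation terms.

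For the inductive step I assume the three claims at layer $\ell-1$, fix a channel $i$, and split on its type at layer $\ell$, in each case comparing the argument of $\sigma_\ell$ in \Cref{align:def_gnn_update} computed on $D$ versus $D'$. If $i$ is stable, the defining conditions force $\mathbf{B}_\ell^c[i,j] = 0$ for all $c,j$, so the aggregation term vanishes identically, and force $\mathbf{A}_\ell[i,j] \neq 0$ only for stable $j$, whose values agree by the induction hypothesis; hence the argument of $\sigma_\ell$ is unchanged and $\mathbf{v}_\ell'[i] = \mathbf{v}_\ell[i]$. If $i$ is increasing (the decreasing case is symmetric, with all inequalities reversed via the ``resp.'' clauses of \Cref{def:updown}), I aim to show the argument of $\sigma_\ell$ on $D'$ is at most that on $D$ and then invoke monotonicity of $\sigma_\ell$. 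In the self-term $\sum_j \mathbf{A}_\ell[i,j]\,\mathbf{v}_{\ell-1}[j]$, each summand is controlled by the matching clause of \Cref{def:updown}: for increasing $j$ I combine $\mathbf{A}_\ell[i,j] \geq 0$ with $\mathbf{v}_{\ell-1}'[j] \leq \mathbf{v}_{\ell-1}[j]$; for decreasing $j$ I combine $\mathbf{A}_\ell[i,j] \leq 0$ with $\mathbf{v}_{\ell-1}'[j] \geq \mathbf{v}_{\ell-1}[j]$, so the non-positive factor flips the inequality the correct way; undetermined $j$ is annihilated by $\mathbf{A}_\ell[i,j] = 0$; and stable $j$ contributes equal values.

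For the neighbour-aggregation term in the increasing case I would use feature non-negativity together with $(E^c)' \subseteq E^c$, which follows from $D' \subseteq D$. By \Cref{def:updown} the coefficients $\mathbf{B}_\ell^c[i,j]$ are non-negative for increasing and stable $j$ and zero for decreasing and undetermined $j$, so for every surviving $j$ both the passage to the larger feature values (induction hypothesis, using non-negativity for the stable case) and the passage to the larger edge set $E^c$ can only increase the $D$-value; multiplying by the non-negative coefficient preserves the inequality. Summing the self-term and aggregation inequalities and adding the common bias $\mathbf{b}_\ell$ shows the argument of $\sigma_\ell$ on $D'$ is at most that on $D$, and monotonicity yields $\mathbf{v}_\ell'[i] \leq \mathbf{v}_\ell[i]$.

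The main obstacle is not conceptual but the careful bookkeeping of signs: ensuring every inequality flips correctly when multiplying by a non-positive $\mathbf{A}_\ell[i,j]$ or $\mathbf{B}_\ell^c[i,j]$, and handling all four types of source channel $j$ consistently in each of the increasing and decreasing cases. The one genuinely load-bearing hypothesis beyond the sign conditions is the non-negativity of feature values, which is what legitimises the edge-set-containment step $(E^c)' \subseteq E^c \Rightarrow \sum_{(v,u) \in (E^c)'} \mathbf{u}_{\ell-1}[j] \leq \sum_{(v,u) \in E^c} \mathbf{u}_{\ell-1}[j]$; without it, adding edges could decrease an aggregated value and the monotonicity conclusions would fail.
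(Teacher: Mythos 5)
Your proposal is correct and follows essentially the same route as the paper's proof: a simultaneous induction on $\ell$ over all channel types, with the stable/increasing/decreasing cases handled by exactly the sign bookkeeping from \Cref{def:updown} and the aggregation term controlled via $(E^c)' \subseteq E^c$. Your explicit observation that non-negativity of feature values (from the non-negative activation range and Boolean encoding) is the load-bearing fact behind the edge-set-containment step is a point the paper's proof uses only implicitly ("both of these sums are positive"), so you have if anything made the argument slightly more careful.
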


\begin{proof}[Proof sketch]
The full proof 
is given in \Cref{app:updown_monotonic_proof}.

We show the claim of the lemma by induction on $\ell$. The base case holds because 
$\mathbf{v}'_0[i] \leq \mathbf{v}_0[i]$ for each $i \in \{1, \dots, \delta\}$ by definition of $\texttt{enc}$,
and all channels are increasing in layer $0$. For the inductive step, we prove that it holds at layer $\ell$ for stable, increasing, and decreasing $i$.

For stable $i$, notice that $(\mathbf{A}_\ell \mathbf{v}_{\ell-1})[i]$ is just a sum over channels $j$ that are stable at $\ell-1$, since the non-stable $j$'s are zeroed out. The induction hypothesis then implies $\mathbf{v}_{\ell-1}'[j] = \mathbf{v}_{\ell-1}[j]$ for each such $j$. Furthermore, $\mathbf{B}_\ell^c[i, j] = 0$ for every $j$ and $c$. Hence, $\mathbf{v}_\ell'[i] = \mathbf{v}_\ell[i]$.

For increasing $i$, consider the four possibilities for $j$ at $\ell-1$ and conditions (1) - (3) in the definitions. For example, if $j$ is increasing, then from (1) we have $\mathbf{A}_\ell[i,j] \geq 0$ and by our induction hypothesis, $\mathbf{v}_{\ell-1}'[j] \leq \mathbf{v}_{\ell-1}[j]$. In any of these cases, we find that $\mathbf{A}_\ell[i, j] \mathbf{v}_{\ell-1}'[j] \leq \mathbf{A}_\ell[i, j] \mathbf{v}_{\ell-1}[j]$.
For the product involving $\mathbf{B}_\ell^c$, if $j$ is decreasing or undetermined at $\ell-1$ then from (2) and (3) we have that $\mathbf{B}_\ell^c[i, j] = 0$. Otherwise, by the inductive hypothesis we obtain $\mathbf{u}'_{\ell-1}[j] \leq \mathbf{u}_{\ell-1}[j]$ for every node $u \in \texttt{enc}(D')$. Then since $(E^c)' \subseteq E^c$, the inequality is preserved when summing over neighbours of $v$. Also, since from (4) we have that $\mathbf{B}_\ell^c[i, j] \geq 0$, we find that for all $j$,
$$ \mathbf{B}_\ell^c[i, j] ( \sum_{(v,u) \in (E^c)'} \mathbf{u}'_{\ell-1} )[j] \leq \mathbf{B}_\ell^c[i, j] ( \sum_{(v,u) \in E^c} \mathbf{u}_{\ell-1} )[j] . $$
Therefore, by monotonicity of $\sigma$, $\mathbf{v}_\ell'[i] \leq \mathbf{v}_\ell[i]$.
For decreasing $i$, the proof is very similar to that for increasing $i$.
\end{proof}
Both increasing and stable channels are amenable to rule extraction. In particular, the soundness check in Proposition \ref{prop:nabn_sound} extends seamlessly to this new setting. 

\begin{proposition} \label{prop:updown_sound}
Let $\mathcal{N}$ be a sum-GNN as in \Cref{eq:GNN}, and let $r$  be a rule of the form \eqref{eq:ruleform} where $H$ mentions a unary predicate $U_p$. Let $S$ be an arbitrary set of as many constants as there are variables in $r$.
Assume channel $p$ in $\mathcal{N}$ is increasing or stable at layer $L$.
Then $r$ is sound for $T_\mathcal{N}$ if and only if
$H\nu \in T_\mathcal{N}(D_r^{\nu})$
for each substitution $\nu$ mapping the variables of $r$ to constants in $S$ and such that $D_r^{\nu} \models B_i\nu$ for each inequality $B_i$ in r.
\end{proposition}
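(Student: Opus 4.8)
The plan is to mirror the proof of \Cref{prop:nabn_sound} almost verbatim, replacing its single appeal to \Cref{lemma:nabn_monotonic} with an appeal to \Cref{lemma:updown_monotonic}. The only change between the two statements is that the hypothesis on channel $p$ is weakened from ``safe'' to ``increasing or stable''. Since the safe-channel proof uses its monotonicity lemma at exactly one point, and only through the one-sided bound $\mathbf{v}_L'[p] \leq \mathbf{v}_L[p]$, the whole argument transfers once I verify that this same inequality is still available under the new hypothesis.

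For the ``if'' direction I would fix an arbitrary dataset $D$ and an arbitrary fact $H\mu \in T_r(D)$, so that $\mu$ maps the variables of $r$ to constants of $D$ with $D \models B_i\mu$ for every body literal $B_i$. Exactly as before, I choose an injection $\sigma$ from the image of $\mu$ into $S$ (which exists because $S$ has at least as many constants as there are variables in $r$), set $\nu$ to be the composition of $\mu$ and $\sigma$, and note that $D_r^{\nu} \models B_i\nu$ for each body inequality $B_i$ since $\sigma$ is injective. The hypothesis then gives $H\nu \in T_\mathcal{N}(D_r^{\nu})$, and because $T_\mathcal{N}$ is invariant under one-to-one renamings of constants, this yields $H\mu \in T_\mathcal{N}(D_r^{\mu})$, with $D_r^{\mu} \subseteq D$ by definition of $\mu$.

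The crux is the monotonicity step. Writing $a$ for the single constant in $H\mu$, $v$ for its vertex in $\texttt{enc}(D_r^{\mu})$, and $\mathbf{v}_L'$, $\mathbf{v}_L$ for the layer-$L$ vectors computed for $v$ by $\mathcal{N}$ on $D_r^{\mu}$ and $D$ respectively, I need $\mathbf{v}_L'[p] \leq \mathbf{v}_L[p]$. Here the two cases of the hypothesis are handled uniformly by \Cref{lemma:updown_monotonic}: if $p$ is increasing at layer $L$, the lemma gives $\mathbf{v}_L'[p] \leq \mathbf{v}_L[p]$ directly; if $p$ is stable, it gives the equality $\mathbf{v}_L'[p] = \mathbf{v}_L[p]$, which in particular implies the same inequality. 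From $H\mu \in T_\mathcal{N}(D_r^{\mu})$ I obtain $\texttt{cls}_t(\mathbf{v}_L'[p]) = 1$, hence $\mathbf{v}_L'[p] \geq t$, and therefore $\mathbf{v}_L[p] \geq t$, so that $\texttt{cls}_t(\mathbf{v}_L[p]) = 1$ and $H\mu \in T_\mathcal{N}(D)$. This closes the soundness direction. The ``only if'' direction is immediate and identical to before: any violating $\nu$ witnesses $H\nu \in T_r(D_r^{\nu}) \setminus T_\mathcal{N}(D_r^{\nu})$, so $r$ is unsound.

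I do not anticipate a genuine obstacle, since all the substantive work is already carried out in \Cref{lemma:updown_monotonic}. The only point requiring care is the trivial but essential observation that a stable channel, although its value is entirely invariant under the addition of facts, still satisfies the one-sided bound $\mathbf{v}_L'[p] \leq \mathbf{v}_L[p]$ that is needed to push the classification threshold through. Recognising this lets me merge the increasing and stable cases into a single line rather than duplicating the threshold argument, and it is precisely what makes the proof of \Cref{prop:nabn_sound} reusable without modification.
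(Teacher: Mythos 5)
Your proposal is correct and follows essentially the same route as the paper's own proof: both replicate the argument for \Cref{prop:nabn_sound} verbatim, substituting the single appeal to \Cref{lemma:nabn_monotonic} with \Cref{lemma:updown_monotonic}, and both merge the increasing and stable cases by observing that the equality for stable channels implies the one-sided bound $\mathbf{v}_L'[p] \leq \mathbf{v}_L[p]$ needed for the threshold argument. The injective renaming into $S$, the invariance of $T_\mathcal{N}$ under one-to-one constant mappings, and the counterexample argument for the ``only if'' direction all match the paper exactly.
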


\changemarker{
On the other hand, if a channel is decreasing or undetermined, it does not behave monotonically, and so it may or may not have sound rules.}
\changemarker{We conclude this section
by relating the classes of channels described here to those of Section \ref{sec:safe}, with a full proof given in \Cref{app:channel_relations_proof}.
}

\begin{theorem} \label{thm:channel_relations}
\changemarker{
Safe channels are increasing or stable.
There exist increasing unsafe channels and stable unsafe channels.
}
\end{theorem}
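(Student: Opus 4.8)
The plan is to prove the two assertions separately. For the first---that every safe channel is increasing or stable---I would proceed by induction on the layer $\ell$, mirroring the inductive structure shared by \Cref{def:nabn} and \Cref{def:updown}. The base case $\ell = 0$ is immediate, since every channel is simultaneously safe and increasing. For the inductive step, I would assume that every safe channel at layer $\ell-1$ is increasing or stable; equivalently, by contraposition, every channel that is decreasing or undetermined at layer $\ell-1$ is unsafe. Now take a channel $i$ that is safe at layer $\ell$. If $i$ is stable there is nothing to prove, so I would assume it is not stable and show that it satisfies the four conditions for being increasing in \Cref{def:updown}.

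The verification matches the two safety requirements of \Cref{def:nabn} against the four increasing conditions. Safety guarantees that the $i$-th row of $\mathbf{A}_\ell$ and of every $\mathbf{B}_\ell^c$ is non-negative, which immediately yields condition~(1) (for increasing $j$, $\mathbf{A}_\ell[i,j] \geq 0$) and condition~(4) ($\mathbf{B}_\ell^c[i,j] \geq 0$). For conditions~(2) and~(3), I would invoke the inductive hypothesis: if $j$ is decreasing or undetermined at layer $\ell-1$, then $j$ is unsafe, so the second safety requirement forces $\mathbf{A}_\ell[i,j] = 0$ and $\mathbf{B}_\ell^c[i,j] = 0$, which is exactly what~(2) and~(3) demand. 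A stable $j$ imposes no constraint beyond~(4), which already holds. Hence $i$ is increasing, completing the induction.

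For the second assertion, I would exhibit explicit small sum-GNNs. The guiding observation is that a negative matrix entry makes a channel unsafe, yet is compatible with monotonic behaviour when it multiplies a stable or a decreasing channel. Concretely, for a stable unsafe channel I would take a two-layer network in which some channel $j$ is made stable at layer $1$ by zeroing its incoming row (so its value is a constant), and then let a channel $i$ at layer $2$ read from $j$ through a negative entry $\mathbf{A}_2[i,j] < 0$ with all $\mathbf{B}^c_2[i,\cdot] = 0$: channel $i$ remains stable, because it depends only on a stable channel, yet is unsafe, because its row contains a negative value. For an increasing unsafe channel I would first create a decreasing channel $j$ at layer $1$ (setting its incoming $\mathbf{A}_1$-row and $\mathbf{B}^c_1$-row non-positive), and then let a channel $i$ at layer $2$ read from it through a negative entry $\mathbf{A}_2[i,j] < 0$: condition~(2) for increasing is then satisfied, so $i$ is increasing, while the negative entry again renders it unsafe. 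Both witnesses can be packaged into a single network with two intermediate channels.

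The main obstacle, and the reason the witnesses require at least two layers, is that at layer $1$ every channel of layer $0$ is increasing and none is stable. Consequently a stable channel at layer $1$ must have an all-zero incoming row in $\mathbf{A}_1$ and in every $\mathbf{B}^c_1$, and is therefore automatically safe, so a stable unsafe channel cannot appear before layer $2$; likewise a decreasing channel first becomes available only at layer $1$, so the increasing unsafe witness that consumes it can appear only at layer $2$. The care lies in arranging the two-layer dependencies so that the monotonicity conditions of \Cref{def:updown} hold exactly while a single negative matrix entry violates \Cref{def:nabn}. Once the layer-$1$ channels of the required types are in place, checking the defining conditions for the layer-$2$ witnesses is a direct verification.
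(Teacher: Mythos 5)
Your proposal is correct and follows essentially the same route as the paper's proof: an induction on the layer that matches the two safety requirements of \Cref{def:nabn} against conditions (1)--(4) of \Cref{def:updown} (your contrapositive phrasing of the inductive hypothesis is just a repackaging of the paper's case split on $j$ safe versus unsafe), followed by explicit two-layer witnesses with zeroed $\mathbf{B}$-matrices that coincide with the paper's examples (all-zero $\mathbf{A}_1$ with negative $\mathbf{A}_2$ for stable-unsafe, and negative $\mathbf{A}_1$ and $\mathbf{A}_2$ for increasing-unsafe). One trivial nit: for the layer-$1$ decreasing channel, ``non-positive'' incoming rows must contain at least one strictly negative entry, since an all-zero row would make the channel stable rather than decreasing.
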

\subsection{Unbounded Channels} \label{sec:unsound}

In Section \ref{sec:stable}, we noted that extracting sound Datalog rules for channels that are decreasing or undetermined might be possible.
In this section, we identify a subset of such channels, which we refer to as \emph{unbounded}, for which \emph{no} sound Datalog rule may exist.  Thus, being unbounded provides a sufficient condition for non-monotonic channel behaviour.

The techniques in this section require that the sum-GNN uses ReLU as the activation function in all but (possibly) the last layer $L$.
Furthermore, we require the co-domain of the activation function in layer $L$ to include a number strictly less than the threshold $t$ of the classification function $\texttt{cls}_t$.
This restriction is non-essential and simply excludes GNNs that derive all possible facts regardless of the input dataset,
in which case
all rules would be sound.

\begin{definition} \label{def:neginfline}
Channel $p \in\{1, \dots, \delta_L\}$ is \emph{unbounded} at layer $L$ 
if there exist a Boolean vector $\mathbf{y}_0$ of dimension $\delta_0$, and 
  a sequence $c_1, \dots, c_L$ of (not necessarily distinct) colours in $\col$,
such that, with $\{\mathbf{y}_\ell\}_{\ell=1}^{L-1}$ the sequence defined inductively 
as $\mathbf{y}_\ell := \text{ReLU} ( \mathbf{B}_\ell^{c_\ell} \mathbf{y}_{\ell - 1} )$ for each $1 \leq \ell \leq L-1$, it holds that
$(\mathbf{B}_L^{c_L} \mathbf{y}_{L-1}) [p] <0$.
\end{definition}

Intuitively, the value of an unbounded channel at layer $L$ for a given vertex
can always be made smaller
than the classification threshold $t$ by extending
the
input graph in a precise way. This  helps us prove that 
each rule $r$ with head predicate $U_p$ corresponding to an unbounded channel $p$ is not sound for $T_\mathcal{N}$.
In particular,
we first generate the dataset $D^\nu_r$, where $\nu$ is a substitution that maps each variable in $r$ to a different constant, and we let $a := \nu(x)$.
Then we extend $D^{\nu}_r$ to a dataset $D'$ 
in a way that ensures
that the value of channel $p$ at layer $L$ for the vertex $v_a$ in $D'$ is smaller than the threshold
$t$.
\changemarker{Thus}, $U_p(a) \notin T_{\mathcal{N}}(D')$, even though $U_p(a)$ is clearly in $T_{r}(D')$, \changemarker{so dataset $D'$ is a counterexample to the soundness of $r$}.

The following theorem formally states this result. The full proof of the theorem is given in \Cref{app:neginfline_sound_proof}.

\begin{theorem} \label{thm:neginfline_sound}
Let $\mathcal{N}$ be a sum-GNN as in Equation \eqref{eq:GNN}, where 
$\sigma_{\ell}$ is ReLU for each $1 \leq \ell \leq L-1$,
and the co-domain of $\sigma_L$ contains a number strictly less than the threshold $t$ of the classification function $\texttt{cls}_t$.
Then, each rule with head predicate $U_p$ corresponding to an unbounded channel $p$ is unsound for $\mathcal{N}$.
\end{theorem}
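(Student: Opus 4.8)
The plan is to exhibit, for any rule $r$ of the form \eqref{eq:ruleform} with head $U_p(x)$ where $p$ is unbounded at layer $L$, an explicit dataset witnessing $T_r(D') \not\subseteq T_\mathcal{N}(D')$. First I would fix a substitution $\nu$ sending the variables of $r$ to pairwise distinct constants not occurring in $r$, set $a := \nu(x)$, and let $D_r^\nu$ be the grounded body; then $U_p(a) = H\nu \in T_r(D_r^\nu)$, since all body atoms hold and every inequality is satisfied because $\nu$ is injective and avoids the constants of $r$. By \Cref{def:neginfline}, unboundedness of $p$ supplies a Boolean vector $\mathbf{y}_0$, colours $c_1, \dots, c_L$, and the sequence $\mathbf{y}_\ell = \text{ReLU}(\mathbf{B}_\ell^{c_\ell}\mathbf{y}_{\ell-1})$ with $(\mathbf{B}_L^{c_L}\mathbf{y}_{L-1})[p] < 0$. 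I would enlarge $D_r^\nu$ to a dataset $D' \supseteq D_r^\nu$ by attaching a gadget to $a$, and then show that channel $p$ at layer $L$ for $v_a$ can be driven below $t$, so that $U_p(a) \notin T_\mathcal{N}(D')$; as $T_r$ is monotone and $U_p(a) \in T_r(D_r^\nu) \subseteq T_r(D')$, this $D'$ is the required counterexample.

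The gadget is a full $N$-ary tree of depth $L$ rooted at $v_a$, built over fresh constants: $v_a$ has $N$ children joined by edges of colour $c_L$, each vertex whose subtree has depth $k$ has $N$ children joined by colour $c_k$, the $N^L$ leaves (depth $L$) receive the labelling $\mathbf{y}_0$ (encoded by the unary facts $U_j$ with $\mathbf{y}_0[j]=1$), and all internal gadget vertices receive the zero labelling. Calling a vertex whose subtree has depth $k$ a \emph{type-$k$} vertex, symmetry gives all type-$k$ vertices a common layer-$\ell$ value $\mathbf{t}_{k,\ell}$ obeying $\mathbf{t}_{k,\ell} = \text{ReLU}(\mathbf{b}_\ell + \mathbf{A}_\ell \mathbf{t}_{k,\ell-1} + N\,\mathbf{B}_\ell^{c_k}\mathbf{t}_{k-1,\ell-1})$ for $1 \leq \ell \leq L-1$. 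The central claim, proved by induction on $\ell$, is that $\mathbf{t}_{k,\ell}$ has magnitude $O(N^{\min(k,\ell)})$ and, on the diagonal, $\mathbf{t}_{\ell,\ell} = N^\ell \mathbf{y}_\ell + O(N^{\ell-1})$. The leading term is extracted using positive homogeneity of $\text{ReLU}$ together with a case split on the sign of each component of $\mathbf{B}_\ell^{c_\ell}\mathbf{y}_{\ell-1}$, while the bias and self-loop terms are of strictly lower order and absorbed into the $O(N^{\ell-1})$ remainder. In particular the $N$ gadget roots attached to $v_a$ are type-$(L-1)$ vertices with layer-$(L-1)$ value $N^{L-1}\mathbf{y}_{L-1} + O(N^{L-2})$.

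Finally I would compute the argument of $\sigma_L$ for channel $p$ at $v_a$ in \Cref{align:def_gnn_update}. The $N$ gadget roots contribute through colour $c_L$ the amount $N\,(\mathbf{B}_L^{c_L}(N^{L-1}\mathbf{y}_{L-1} + O(N^{L-2})))[p] = N^L (\mathbf{B}_L^{c_L}\mathbf{y}_{L-1})[p] + O(N^{L-1})$, whose leading coefficient is negative. Every remaining term is $O(N^{L-1})$: the bias is constant, and a separate induction shows that the finitely many original vertices have layer-$\ell$ value $O(N^\ell)$, so $(v_a)_{L-1} = O(N^{L-1})$, whence $(\mathbf{A}_L (v_a)_{L-1})[p] = O(N^{L-1})$, and the finitely many original neighbours of $v_a$ also contribute $O(N^{L-1})$. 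Thus the pre-activation tends to $-\infty$ as $N \to \infty$. Since the range of $\sigma_L$ contains some $s < t$ and $\sigma_L$ is monotonically increasing, $\sigma_L(x) \leq s < t$ for all sufficiently negative $x$; taking $N$ large enough forces $(v_a)_L[p] \leq s < t$, hence $\texttt{cls}_t((v_a)_L[p]) = 0$ and $U_p(a) \notin T_\mathcal{N}(D')$, completing the counterexample.

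The main obstacle is the growth-rate bookkeeping, and it also explains why the construction must branch at \emph{every} level. The difficulty is that the gadget feeds $v_a$ at the intermediate layers as well, so $(v_a)_{L-1}$ itself grows with $N$ and is re-injected through $\mathbf{A}_L$; moreover the finite original subgraph can route $v_a$'s inflated value back to $v_a$. The argument succeeds only because a type-$(L-1)$ root \emph{matures} exactly at layer $L-1$: its value is $\Theta(N^{L-1})$ at layer $L-1$ but merely $O(N^{L-2})$ at layer $L-2$, so the final aggregation over $N$ such roots produces $\Theta(N^L)$, one full power of $N$ above everything that has passed through the self-loop or the bounded original part. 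Making this domination rigorous, by simultaneously controlling all $\mathbf{t}_{k,\ell}$ and all original-vertex values across layers, is the technical heart of the proof.
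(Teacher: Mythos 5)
Your proposal is correct in outline and reaches the same counterexample conclusion, but via a genuinely different gadget and analysis than the paper. The paper attaches a \emph{path} $v_1 \to v_2 \to \dots \to v_{L-1} \to v_a$ coloured $c_2,\dots,c_L$, with the single scaling parameter $d$ being the number of leaves labelled $\mathbf{y}_0$ attached to $v_1$ via $c_1$. Because every leaf is then at distance exactly $\ell$ from $v_\ell$ (and distance $L$ from $v_a$), the value of $v_\ell$ at layer $\ell-1$, the values of all original vertices up to layer $L-1$, and the side-contributions to $v_a$ at layer $L$ are all \emph{exactly} independent of $d$; the induction establishes the exact affine form $\lambda^d_\ell(v_\ell) = \mathbf{g}_\ell + d\cdot\mathbf{y}_\ell$ for $d \geq d_\ell$, with a three-way sign split on $\mathbf{z}_{\ell-1}[i] = (\mathbf{B}^{c_\ell}_\ell\mathbf{y}_{\ell-1})[i]$ (ReLU clips to $0$, is constant, or acts as the identity) and explicit thresholds $d_1 \leq \dots \leq d_{L-1}$. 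Your full $N$-ary tree branching at every level instead makes \emph{every} internal vertex, including $v_a$ at intermediate layers, grow with $N$, which is precisely what forces your two-parameter bookkeeping $\mathbf{t}_{k,\ell} = O(N^{\min(k,\ell)})$ with the diagonal $\mathbf{t}_{\ell,\ell} = N^\ell\mathbf{y}_\ell + O(N^{\ell-1})$, plus a separate induction bounding the original vertices; the estimates you state do close (the sign-split under ReLU works with uniform constants since there are finitely many $(k,\ell)$ pairs and channels, and the final $N^L$ term dominates all $O(N^{L-1})$ contributions, including the re-injection through $\mathbf{A}_L$), so your argument is sound once the asymptotic constants are made explicit. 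What each buys: your version is more robust to the topology of the gadget but pays with the "growth-rate bookkeeping" you rightly identify as its technical heart; the paper's path keeps all interference identically zero by construction, yielding exact (not asymptotic) control and a shorter proof. One remark in your proposal is slightly off: branching at every level is not \emph{necessary} for the construction to succeed --- it is only necessary given your choice of gadget. The difficulty you describe (the gadget inflating $v_a$ at intermediate layers and the original subgraph routing that inflation back) is an artifact of branching at the root; the paper's gadget, which "branches" only at the far end of a length-$L$ path, avoids it entirely, since information from the $d$ leaves cannot reach $v_a$ before layer $L$.
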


\begin{proof}[Proof sketch]
Consider an unbounded channel $p$ at layer $L$ and a rule $r$ with head $U_p(x)$.
Let $\nu$ be an arbitrary substitution mapping each variable in $r$ to a 
different constant.
Let $G= \texttt{enc}( D_r^{\nu})$ and let $v_a$ be the vertex
corresponding to constant $a:= \nu(x)$.
Let 
$\mathbf{y}_0$ and $c_1, \dots, c_L$ be a Boolean vector and a sequence of colours, 
respectively, satisfying the condition in Definition \ref{def:neginfline} 
for $p$, and let  
$\{\mathbf{y}_\ell\}_{\ell=1}^{L-1}$
be the sequence computed for them as in the definition.

For each $d \in \mathbb{N}$, let $D_d$ be the extension of $D_r^{\nu}$ with
\begin{enumerate}[leftmargin=1cm]
    \item facts $R^{c_1}(a_j,b_1)$ for $1 \leq j \leq d$, facts $R^{c_{\ell}}(b_{\ell-1},b_{\ell})$ for $2 \leq \ell \leq L - 1$, and fact $R^{c_L}(b_{L-1},a)$;
    \item facts $U_k(a_j)$ for $1 \leq j \leq d$ and $k \leq \delta$ s.t.\  
    $\mathbf{y}_0[k]=1$;
\end{enumerate}
for $a_j$ and $b_\ell$ distinct constants not in $D^{\nu}_r$ for each $j$ and $\ell$. Furthermore, let $G_d$ be the canonical encoding of $D_d$,
where $v_\ell$ is the vertex corresponding to constant $b_\ell$
and $u_j$ is the vertex corresponding to constant $a_j$. 
The resulting graph $G_d$ is illustrated in Figure \ref{fig:counterexample},
where edges labelled with question marks represent the
canonical encoding of $D^\nu_r$.

\begin{figure}
\begin{tikzpicture}
    \node[shape=circle,draw=black, minimum size =1cm] (u1) at (0, 11) {$u_1$};
    \node[shape=circle,draw=black, minimum size =1cm] (u) at (-1.5, 10.5) {$...$};
    \node[shape=circle,draw=black, minimum size =1cm] (ud) at (-1.5, 9) {$u_d$};
    \node[shape=circle,draw=black, minimum size =1cm] (v1) at (0, 9) {$v_1$};
    \node[shape=circle,draw=black, minimum size =1cm] (v2) at (1.5, 9) {$v_2$};
    \node[shape=circle,draw=black, minimum size =1cm] (v) at (3.5, 9) {$...$};
    \node[shape=circle,draw=black,scale=0.9] (vL) at (5.5, 9) {$v_{L-1}$};
    \node[shape=circle,draw=black, minimum size =1cm] (vx) at (3.5, 11) {$v_{a}$};
    \node[shape=circle,draw=black, minimum size =1cm] (b1) at (5.5, 11) {$...$};
    \node[shape=circle,draw=black, minimum size =1cm] (b2) at (1.5, 11) {$...$};

    \path [->] (u1) edge node[left] {${c_1}$} (v1);
    \path [->] (u) edge node[below] {${c_1}$} (v1);
    \path [->] (ud) edge node[below] {${\;\;c_1}$} (v1);
    \path [->] (v1) edge node[below] {${c_2}$} (v2);
    \path [->] (v2) edge node[below] {${c_3}$} (v);
    \path [->] (v) edge node[below] {${c_{L-1}}$} (vL);
    \path [->] (vL) edge node[right] {${\;c_{L}}$} (vx);
    \path [<-] (b1) edge node[above] {$?$} (vx);
    \path [->] (b2) edge node[above] {$?$} (vx);
\end{tikzpicture}
\caption{Canonical encoding $G_d$ of the dataset $D_d$ that extends $D^{\nu}_r$ in the proof of Theorem \ref{thm:neginfline_sound} for a given $d \in \mathbb{N}$.}
\label{fig:counterexample}
\end{figure}
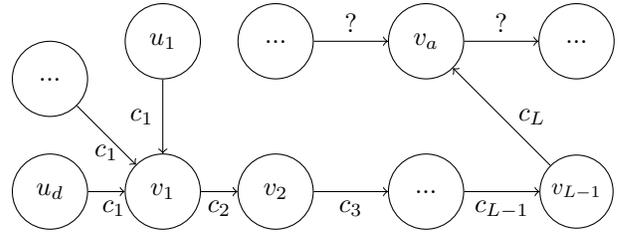

We next show that there exists $d^* \in \mathbb{N}$ such that 
the value of channel $p$ at layer $L$ for vertex $v_a$ in the canonical encoding of $D_{d^*}$ 
is under the threshold $t$.
To find $d^*$, we inductively define a number $d_{\ell}$
for each $1 \leq \ell \leq L-1$
that satisfies the following condition: 
\begin{enumerate}[label=(C\arabic*),leftmargin=1cm]
\item for all $d \geq d_{\ell}$, the value of the feature vector computed by $\mathcal{N}$ on $G_d$ in layer $\ell$ for vertex
$v_{\ell}$ is $\mathbf{g}_{\ell} + d \cdot \mathbf{y}_{\ell}$,
where $\mathbf{g}_{\ell}$ is a fixed vector that does not depend on $d$.\label{cond:linearform}
\end{enumerate}
Each $d_\ell$ can be defined inductively (assuming $d_0=1$) as follows.
\changemarker{
Condition \ref{cond:linearform} ensures that if $d \geq d_{\ell-1}$, the value of 
$\mathbf{v}_{\ell}[j]$ computed by equation
\eqref{align:def_gnn_update} 
for vertex $v=v_{\ell}$ in layer $\ell$ when applying $\mathcal{N}$ to graph $G_d$ is of the form $\mbox{ReLU}(S_{j,d})$, for $S_{j,d}$ a sum that contains a term of the form 
$z_j = d \cdot (\mathbf{B}^{c_\ell}_\ell \mathbf{y}_{\ell-1}) [j]$
and no other terms depending on $d$.
We define $d_\ell$ as the smallest integer large enough to ensure that each $z_j$ dominates the sum of all other terms in $S_{j,d}$.
Hence, if $d \geq d_{\ell}$, we have that if $z_j$ is positive, then so is $S_{j,d}$, and ReLU will act on it as the identity;
if $z_j$ is negative, then so is $S_{j,d}$,
and ReLU will map it to $0$, and if $z_j =0$, then $\mbox{ReLU}(S_{j,d})$
will be independent of $d$. Thus, in all three cases, $\mbox{ReLU}(S_{j,d})$ is 
 of the form $\mathbf{g}_\ell[j] + d \cdot \mathbf{y}_{\ell}[j]$, for $\mathbf{g}_\ell[j]$ a number that does not depend on $d$,
 and so Condition \ref{cond:linearform} holds for $d_{\ell}$ as required.}

\changemarker{Finally, we consider the instance of equation
\eqref{align:def_gnn_update} 
that computes the value of vertex $v_a$ in layer $L$ when applying $\mathcal{N}$ to graph $G_d$, for $d \geq d_{\ell-1}$.
Condition \ref{cond:linearform} ensures that the value of $\mathbf{v}_L[p]$ is the result of applying $\sigma_{L}$ 
to a sum that contains a term of the form 
$d \cdot (\mathbf{B}^{c_L}_L \mathbf{y}_{L-1}) [p]$ and no other terms depending on $d$.} 
The definition of an unbounded channel then ensures
that $(\mathbf{B}^{c_L}_L \mathbf{y}_{L-1}) [p]$ is negative,
and so by choosing $d$ sufficiently large, 
the aforementioned sum is arbitrarily small towards $-\infty$.
But $\sigma_L$ is monotonic and includes \changemarker{a number less than} $t$ in its codomain, which means that 
there exists a minimum natural number $d$ such that the image by $\sigma_L$
of the aforementioned sum is under the threshold $t$; we then let 
$d^*$ be this number.

Dataset $D_{d^*}$ is therefore a counterexample showing that $r$ is not sound
for $\mathcal{N}$. Indeed, when applying $\mathcal{N}$ to $G_{d^*}$, the value of channel $p$ for vertex $v_a$ in layer $L$
is smaller than the threshold 
$t$, and so we have that $U_p(a) \notin T_{\mathcal{N}}(D_{d^*})$.
However, $U_p(a) \in T_r(D_{d^*})$, since $D_{d^*}$ contains $D_r^{\nu}$ and 
all inequalities in the body of $r$ are satisfied because $\nu$ maps each variable to a different
constant. Therefore, rule $r$ is unsound for $\mathcal{N}$. 
\end{proof}

In order to check which channels are unbounded in practice, we initialise an
empty set $S$ of unbounded channels. We then enumerate all Boolean feature
vectors $\mathbf{y}_0$ of dimension $\delta_0$ and all sequences $c_1, \dots, c_L$
of colours; for each vector and sequence, we compute the vector $\mathbf{B}^{c_L}_{L} \mathbf{y}_{L-1}$ as shown in Definition \ref{def:neginfline} and add to $S$ each
channel $p$ such that 
$(\mathbf{B}^{c_L}_{L} \mathbf{y}_{L-1})[p]$ is negative. 
The enumeration can be stopped early if all channels are found
to be unbounded (i.e.\ whenever $S$ becomes $\{1, \dots, \delta\}$). 

In most practical cases, however, the dimension $\delta$ and number
of colours $|\col|$ and layers $L$ are large enough that it is not practically feasible to enumerate all combinations.
In those cases, we randomly sample a fixed number of pairs
$(\mathbf{y}_0,\{c_\ell\}_{\ell=1}^L)$ of Boolean feature vectors and
sequences of colours. 
All channels in $S$ after this iteration
will be unbounded, but channels not in $S$ may also be unbounded. 

\changemarker{

To conclude this section, we relate unbounded channels to those defined in \Cref{sec:safe} and \Cref{sec:stable}.
A full proof is given in \Cref{app:unbounded_channel_relations_proof}.

}

\begin{theorem} \label{thm:unbounded_channel_relations}
\changemarker{
Unbounded channels are neither increasing, nor stable, nor safe. There exist, however, decreasing unbounded channels and undetermined unbounded channels.
}
\end{theorem}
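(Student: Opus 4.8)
The plan is to prove the theorem in two parts: first the negative claim that unbounded channels cannot be increasing, stable, or safe; and second the existence claim that decreasing and undetermined unbounded channels do exist. For the first part, I would argue by contraposition using the monotonicity lemmas already established. Suppose channel $p$ is unbounded at layer $L$ but also increasing or stable. By \Cref{lemma:updown_monotonic}, adding facts to a dataset can never decrease the value of channel $p$ at any vertex (it stays equal if stable, and can only increase if increasing). But the defining construction in the proof of \Cref{thm:neginfline_sound} produces, for a base dataset $D_r^\nu$ and its extension $D_{d^*} \supseteq D_r^\nu$, a strict decrease in the value of channel $p$ at vertex $v_a$ below the threshold $t$ — in particular the value becomes arbitrarily small (towards $-\infty$ before applying $\sigma_L$) as $d$ grows. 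This directly contradicts the inequality $\mathbf{v}_L'[p] \leq \mathbf{v}_L[p]$ (for increasing) or equality (for stable) guaranteed by the lemma, where $D'$ is the smaller dataset. Since every safe channel is increasing or stable by \Cref{thm:channel_relations}, the safe case follows immediately.

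For the existence claim, I would exhibit small explicit sum-GNNs. The cleanest route is to construct a single-layer ($L=1$) sum-GNN, since then the unbounded condition reduces to requiring a single colour $c_1$ and Boolean vector $\mathbf{y}_0$ with $(\mathbf{B}_1^{c_1}\mathbf{y}_0)[p] < 0$. For a decreasing unbounded channel, I would choose $\mathbf{B}_1^{c}[p,j]$ to be negative for some $j$ (say all entries in row $p$ negative and $\mathbf{A}_1$ row $p$ with the appropriate sign pattern so that the channel qualifies as decreasing by \Cref{def:updown}): setting $\mathbf{y}_0$ to select channel $j$ makes $(\mathbf{B}_1^{c}\mathbf{y}_0)[p] < 0$, witnessing unboundedness. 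For an undetermined unbounded channel, I would design row $p$ of the weight matrices to mix signs in a way that violates all of the stable, increasing, and decreasing conditions of \Cref{def:updown} (for instance, having a positive $\mathbf{A}_1$ entry on an increasing input channel and a negative $\mathbf{B}_1^c$ entry, which forces undetermined) while still admitting a Boolean $\mathbf{y}_0$ making $(\mathbf{B}_1^{c}\mathbf{y}_0)[p]$ negative.

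The main obstacle I anticipate is the bookkeeping in the existence half: I must verify simultaneously that the chosen weights satisfy the (somewhat intricate) sign and zeroing conditions of \Cref{def:updown} that classify the channel as decreasing (respectively undetermined), and that the same weights satisfy \Cref{def:neginfline} for unboundedness. These two requirements pull in compatible but nontrivial directions — decreasing requires $\mathbf{B}_\ell^c[i,j]\le 0$ everywhere in the relevant row, which is exactly what makes a suitable $\mathbf{y}_0$ produce a negative entry, so the decreasing case should be routine. The undetermined case is more delicate because I need a row that fails to be classifiable as any of the three monotone types yet still has a negative reachable entry; here I would pick a row where one input channel contributes positively and another negatively through $\mathbf{B}$, and exhibit the specific $\mathbf{y}_0$ that activates the negative contribution. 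I would present these as concrete small matrices and simply check the definitions entry by entry, which is mechanical once the matrices are written down.
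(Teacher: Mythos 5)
Your second half (existence) is essentially the paper's argument and is fine: the paper exhibits two-layer $2\times 2$ examples with zero $\mathbf{A}$-matrices, $\mathbf{B}_1^c$ the identity, and $\mathbf{B}_2^c$ having row $(-1,\,0)$ (decreasing) or $(-2,\,1)$ (undetermined), then generalises; your $L=1$ variant is, if anything, more economical, and your sign analysis is correct — since all channels are increasing at layer $0$, a nonpositive $\mathbf{A}$-row together with a nonpositive $\mathbf{B}$-row containing a strictly negative entry gives a decreasing channel, mixed signs give an undetermined one, and a Boolean $\mathbf{y}_0$ selecting the negative column witnesses unboundedness via \Cref{def:neginfline}.

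The first half, however, has a genuine gap. Your contrapositive of \Cref{lemma:updown_monotonic} requires you to actually exhibit datasets $D' \subseteq D$ on which the value of channel $p$ at $v_a$ \emph{strictly} decreases, but the construction in \Cref{thm:neginfline_sound} establishes only that the value on the enlarged dataset $D_{d^*}$ lies below $t$; it says nothing about the value on $D_r^\nu$, which may already be just as small. The pre-activation does decrease strictly in $d$, but $\sigma_L$ is only monotonically increasing, not strictly: with ReLU (or any activation flat on the relevant region) the post-activation value at $v_a$ can be constant — e.g.\ identically $0$ — across the entire chain $D_r^\nu \subseteq D_0 \subseteq D_1 \subseteq \cdots$, so no behavioural violation of the lemma ever materialises, even though the channel is unbounded. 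Remember that increasing/stable are \emph{structural} classifications (sign conditions on weights) and the lemma is a one-way implication, so behavioural evidence must be strict and actually produced. Your route also silently imports the hypotheses of \Cref{thm:neginfline_sound} (ReLU in layers $1,\dots,L-1$; co-domain of $\sigma_L$ containing a value below $t$), which \Cref{thm:unbounded_channel_relations} does not assume. The paper's proof sidesteps all of this with a two-line structural argument you should adopt: since $\mathbf{y}_{L-1}$ is entrywise nonnegative (Boolean input, then ReLU outputs), $(\mathbf{B}_L^{c_L}\mathbf{y}_{L-1})[p] < 0$ forces a strictly negative entry in row $p$ of $\mathbf{B}_L^{c_L}$, which simultaneously violates the all-zero $\mathbf{B}$-row requirement for stability and condition (4) of \Cref{def:updown} for increasing; the safe case then follows from \Cref{thm:channel_relations} exactly as you say.
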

\section{Experiments} \label{sec:experiments}
We train sum-GNNs on several link prediction
datasets, using the dataset transformation described in \Cref{sec:background}.
For each dataset, we train a sum-GNN with ReLU activation functions, biases, and two layers (this architecture corresponds to R-GCN with biases). The hidden layer of the GNN has twice as many channels as its input. Moreover, we also train 
four additional instances of sum-GNNs, using a modified 
training paradigm to facilitate the learning of 
sound rules (see \Cref{sec:training}).
For each trained model,
we compute standard classification metrics, such as  precision, recall,
accuracy, and F1 score, and 
area under the precision-recall curve (AUPRC).

We train all our models using binary cross entropy loss for training and the Adam optimizer with a standard learning rate of $0.001$. 
For each model, we choose the classification threshold by computing the accuracy on the
validation set across a range of $108$ thresholds between $0$ and $1$ and 
selecting the one which maximises accuracy.
We run each experiment across $10$ different random seeds and present the aggregated metrics. Experiments are run using PyTorch Geometric, with a CPU on a Linux server.

\paragraph{Channel Classification and Rule Extraction}
For each trained model,
we compute which output channels of the model were safe, stable, increasing, and unbounded (using $1000$ random samples of pairs of Boolean feature vectors and colour sequences).
On all datasets, for each channel $p$ that is stable or increasing, we iterate over each Datalog rule in the signature with up to two body atoms and a head predicate $U_p$, and count the number of sound rules, using \Cref{prop:updown_sound} to check soundness.
In benchmarks with a large number of predicates, we only check rules with one body atom, since searching the space of rules with two body atoms is intractable.
For datasets created with LogInfer \cite{liu2023revisiting}, 
which are obtained by enriching a pre-existing 
dataset with the consequences of a known set of Datalog rules, we also check if these rules were sound for the model.

\subsection{Datasets}

We use three benchmarks provided by \cite{teru2020inductive}: WN18RRv1, FB237v1, and NELLv1;
each of these benchmarks provides datasets for training, validation, and testing, as well as negative examples.

We also use LogInfer \cite{liu2023revisiting}, a framework which augments a dataset by applying Datalog
rules of a certain shape---called a ``pattern'''---and adding the consequences of the rules back to the dataset. We apply the LogInfer framework to 
datasets FB15K-237 \cite{toutanova2015observed} and WN18RR \cite{dettmers2018convolutional}.
We consider the very simple rule patterns \emph{hierarchy} and \emph{symmetry} defined in 
\cite{liu2023revisiting}; we also use a new monotonic pattern \emph{cup},
which has a tree-like structure, and a non-monotonic rule pattern \emph{non-monotonic hierarchy}  (nmhier); all patterns are shown in Table \ref{tab:loginfer_patterns}.
For each pattern P, we refer to the datasets obtained by enriching 
FB15K-237 and WN18RR by FB-P and WN-P, respectively.
For each dataset and pattern, we randomly select 10\% of the enriched training dataset to be used as targets and the rest as inputs to the model. Furthermore, we consider an additional instance of 
FB15K-237 and WN18RR where we again apply the hierarchy pattern, but
we include a much larger number of consequences in the enriched training dataset; 
we refer to these datasets as FB-superhier and WN-superhier, as a shorthand for \emph{super-hierarchy}.  
The purpose of this is to create a dataset where it should be as easy as possible for a model
to learn the rules applied in the dataset's creation. 
Finally, we use LogInfer to augment FB15K-237 and WN18RR using multiple rule patterns at 
the same time; in particular, we combine monotonic with non-monotonic rule patterns. 
This allows us to test the ability of our rule extraction methods to recover the sound monotonic rules that were used to generate the enriched dataset, in the presence of facts derived by non-monotonic rules. For example, WN-hier\_nmhier refers to a LogInfer dataset where WN is extended using the hierarchy and non-monotonic hierarchy rule patterns.
When creating a mixed dataset, we reserve a number of predicates to use in the heads of the one pattern, and the rest of the predicates to use in the heads of the other.

Negative examples are generated for LogInfer using predicate corruption: i.e. for each positive example $P(a, b) \in D$, we sample a predicate $Q$ at random to obtain a negative example $Q(a, b)$ such that it does not appear in the training, validation, or test set.
\changemarker{We avoid using constant corruption
to produce negative examples because this would inflate the performance of our models. The reason for this is that the encoding by \cite{cucala2021explainable} that we use in our transformation can only predict binary facts for constant pairs occurring together in input facts, and hence would trivially (and correctly) classify almost all negative examples. }
When generating negative examples for a hier\_nmhier dataset, we sample $T(a, b)$ such that $\{ R(a, b), S(b, c) \} \subseteq D$, to penalise the model for learning the rule pattern $R(x, y) \rightarrow T(x, y)$ instead of the non-monotonic one.
We train for $8000$ epochs on WN18RRv1, FB237v1, NELLv1, and the LogInfer-WN datasets, 
and for $1500$ epochs on the LogInfer-FB datasets.

\begin{table}
\centering
\resizebox{1\columnwidth}{!}{
\begin{tabular}{ll}
\toprule
 & Pattern \\
\midrule
Hierarchy (hier) & $R(x, y) \rightarrow S(x, y)$ \\
Symmetry (sym) & $R(x, y) \rightarrow R(y, x)$ \\

\midrule
Cup & $R(x, y) \land S(y, z) \land T(w, x) \rightarrow P(x, y)$ \\
NM-Hierarchy & $R(x, y) \land \neg S(y, z) \rightarrow T(x, y)$ \\

\bottomrule
\end{tabular}
} 
\caption{LogInfer inference patterns used in this paper.}
\label{tab:loginfer_patterns}
\end{table}

\subsection{Training Paradigm Variations}\label{sec:training}
We use the term ``MGCN'' to refer to R-GCN, but with the negative weights clamped to zero during training, after every optimizer step. In this way, every channel is trivially safe \changemarker{and rules can be checked for soundness}. This is the same training method used by \cite{cucala2021explainable}, but on a different GNN. We also use early-stopping in our normal training of R-GCNs and MGCNs variation, ceasing training if the model loss deteriorates for $50$ epochs in a row. MGCNs are purely monotonic and are thus unable to learn non-monotonic patterns in the data.

To encourage R-GCNs to exhibit non-monotonic behaviour while simultaneously learning \changemarker{sound} monotonic rules, we propose a novel variation to the training routine. 
Our method relies on matrix weight clamping: \changemarker{intuitively, having more zero matrix weights leads to a higher number of safe, stable, and increasing channels.
Smaller weights affect the actual computation less, so we clamp those before larger ones.}
To clamp a sum-GNN $\mathcal{N}$ using a 
\emph{clamping threshold} $\tau \in \mathbb{R}^+$,
we set to $0$ each model weight $\mathbf{A}_\ell[i, j]$ such that
$|\mathbf{A}_\ell[i, j]| \leq \tau$ (and likewise for each $\mathbf{B}_\ell^c[i, j]$).
Given a parameter $X \in [0, 100]$, let $\tau_X$ be the minimum absolute value of a matrix weight of $\mathcal{N}$ such that the percentage of output channels of $\mathcal{N}$ that are stable or increasing is strictly greater than $X$.
We train three separate instances of each model corresponding to three values of $X$: $0$, $25$, and $50$; for a given value of $X$, 
after each epoch, we compute $\tau_X$ and clamp the model's matrices using $\tau_X$ as a clamping threshold.
We refer to the model trained for $X$ as R-$X$. We apply these models to all datasets except those  derived from FB (LogInfer-FB and FB237v1), as it is prohibitively slow to compute which channels are increasing or stable at every epoch due to the high number of predicates used in the dataset.

\subsection{Results}
\paragraph{Monotonic LogInfer Datasets}
The results on the monotonic LogInfer benchmarks are shown in \Cref{results:monotonic_wn}.
We note that, \changemarker{across every experiment}, R-GCN provably never has any sound rules, since every channel is found to be unbounded.
These results are very surprising, since our monotonic rule patterns are very simple. They are especially
surprising for super-hierarchy, where most positive examples are consequences of simple rules.
This stands in contrast with the near-perfect accuracy that R-GCN obtained on the monotonic LogInfer benchmarks: for example, $99.66\%$ for WN-superhier. This proves that even in cases where accuracy is very high, R-GCN may not have learned any sound rules, which raises doubts about the usefulness of accuracy as a metric for KG completion models, since its high value (in this case) does not \changemarker{correspond to sound rules}.

In \Cref{results:monotonic_wn}, R-GCN consistently achieves a lower loss than MGCN on the training data due to the lack of restrictions. However, it is almost always outperformed by MGCN on the test set. We attribute this to the strong inductive bias that MGCNs have for these particular problems, which require learning monotonic rules.
Furthermore, the rules used to augment the datasets are often sound for the MGCN models, which demonstrates that they have effectively learned the relevant completion patterns for these datasets.

\begin{table*}
\centering
\resizebox{2.11\columnwidth}{!}{
\begin{tabular}{ll|rrrr|rrr|rr|rr}
\toprule
Dataset & Model & \%Acc & \%Prec & \%Rec & Loss & \%UB & \%Stable & \%Inc & \%SO & \%NG & \#1B & \#2B \\
\midrule
WN-hier
& R-GCN & \textbf{98.87} & \textbf{99.32} & 98.42 & \textbf{9172}  & 100 & 0 & 0   & 0   & 0 & 0 & 0 \\
& MGCN  & 98.75 & 97.56 & \textbf{100}   & 40366 & 0   & 0 & 100 & \textbf{100} & 0 & \textbf{31} & \textbf{3095} \\

\midrule

WN-sym
& R-GCN & 98.95 & 99.2 & 98.69 & \textbf{13113} & 100 & 0     & 0     & 0   & 0 & 0 & 0 \\
& MGCN  & \textbf{100}   & \textbf{100}  & \textbf{100}   & 35544 & 0   & 19.09 & 80.91 & \textbf{100} & 0 & \textbf{17} & \textbf{1671} \\

\midrule

WN-superhier
& R-GCN & 99.66 & \textbf{99.7}  & 99.63 & \textbf{21331}  & 100 & 0  & 0  & 0  & 0 & 0 & 0 \\
& MGCN  & \textbf{99.67} & 99.34 & \textbf{100}   & 108789 & 0   & 10 & 90 & \textbf{98} & 0 & \textbf{24} & \textbf{2348} \\

\midrule

FB-hier
& R-GCN & 92.21 & 94.28 & 89.88 & \textbf{17209}  & 100 & 0    & 0     & 0     & 0     & 0 & 0 \\
& MGCN  & \textbf{97.47} & \textbf{98.28} & \textbf{96.65} & 164345 & 0   & 3.08 & 96.92 & \textbf{54.05} & 45.95 & \textbf{523} & - \\

\midrule

FB-sym
& R-GCN & 94.03 & 95.72 & 92.21 & \textbf{26397}  & 100 & 0     & 0     & 0    & 0   & 0 & 0 \\
& MGCN  & \textbf{98.11} & \textbf{96.61} & \textbf{99.72} & 127030 & 0   & 27.22 & 72.78 & \textbf{91.4} & 8.6 & \textbf{1929} & - \\

\midrule

FB-superhier
& R-GCN & 98.87 & 99.25 & 98.48 & \textbf{56247}  & 100 & 0    & 0     & 0    & 0    & 0 & 0 \\
& MGCN  & \textbf{99.67} & \textbf{99.6}  & \textbf{99.75} & 426788 & 0   & 2.15 & 97.85 & \textbf{51.9} & 48.1 & \textbf{307} & - \\

\bottomrule
\end{tabular}
} 
\caption{Results for the monotonic LogInfer datasets. Loss is on the training set. \%UB, \%Stable, and \%Inc are the percentages of unbounded, stable, and increasing channels, respectively. \%SO is the percentage of LogInfer rules that are sound for the GNN and \%NG is the percentage of monotonic LogInfer rules that are not sound for the GNN since there is some grounding of the body which does not entail the head. \#1B and \#2B are the number of sound rules with one and two body atoms respectively.}
\label{results:monotonic_wn}
\end{table*}

\paragraph{Benchmark Datasets}
\begin{table*}
\centering
\resizebox{2.11\columnwidth}{!}{
\begin{tabular}{ll|rrrrr|rrr|r|rr}
\toprule
Dataset & Model & \%Acc & \%Prec & \%Rec & AUPRC & Loss & \%UB & \%Stable & \%Inc & \%Safe & \#1B & \#2B \\
\midrule
FB237v1
& R-GCN  & 68.4  & \textbf{99.86} & 36.85 & 0.1407 & \textbf{24}  & 100 & 0     & 0     & 0   & 0 & 0 \\
& MGCN   & \textbf{74.95} & 82.89 & \textbf{64.05} & \textbf{0.1456} & 765 & 0   & 69.44 & 30.56 & 100 & \textbf{13640} & - \\

\midrule

NELLv1
& R-GCN  & 60.4  & 47.96 & 22.35 & \textbf{0.08}   & \textbf{687}  & 100   & 0     & 0     & 0     & 0 & 0 \\
& R-0    & 61.24 & 83.18 & 24.71 & 0.0769 & 1342 & 92.86 & 2.14  & 5     & 6.43  & 0 & 10 \\
& R-25   & 67.35 & 76.62 & 36.35 & 0.0766 & 2095 & 71.43 & 9.29  & 19.29 & 27.96 & 6 & 799 \\
& R-50   & 71.53 & \textbf{91.99} & 46.59 & 0.0761 & 3148 & 40.71 & 18.57 & 38.57 & 56.43 & 9 & 1055 \\
& MGCN   & \textbf{91.94} & 86.17 & \textbf{100}   & 0.0737 & 2549 & 0     & 14.29 & 85.71 & 100   & \textbf{26} & \textbf{3300} \\

\midrule

WN18RRv1
& R-GCN  & 94    & 96.9  & 90.97 & \textbf{0.1197} & \textbf{941}  & 100   & 0     & 0     & 0     & 0 & 0 \\
& R-0    & 94.52 & \textbf{98.36} & 90.55 & 0.1176 & 1109 & 87.78 & 10    & 1.11  & 11.11 & 0 & 16 \\
& R-25   & 94.09 & 96.82 & 91.45 & 0.1157 & 1544 & 61.11 & 28.89 & 4.44  & 33.33 & 3 & 293 \\
& R-50   & 95.15 & 98.32 & 91.88 & 0.1163 & 1513 & 38.89 & 40    & 15.56 & 55.56 & 3 & 222 \\
& MGCN   & \textbf{95.18} & 97.52 & \textbf{92.73} & 0.114  & 2292 & 0     & 44.44 & 55.56 & 100   & \textbf{7} & \textbf{681} \\

\bottomrule
\end{tabular}
} 
\caption{Results for the benchmark datasets. AUPRC is on the validation set and Loss on the training set. \%UB, \%Stable, \%Inc, and \%Safe are the percentages of unbounded, stable, increasing, and safe channels, respectively. \#1B and \#2B are the number of sound rules with one and two body atoms respectively.}
\label{results:baselines}   
\end{table*}
The findings of our experiments on the benchmark datasets are shown in \Cref{results:baselines}.
AUPRC is consistently higher for R-GCN on the validation set than for MGCN but accuracy on the test set is always higher for MGCN. This is because, for most constant pairs used in the validation set targets, there exists no fact in the 
input dataset where these constants appear together.
Our dataset transformation, however, can only predict a fact 
if the constants in it appear together in a fact of the input dataset. 
This means that there are few data points with which to choose a sensible threshold on the validation set.
Nevertheless, the results illustrate how, as $X$ increases in model R-$X$, the ratios of stable/increasing/safe channels and the number of sound rules increase, whilst the AUPRC generally deteriorates.

\paragraph{Mixed LogInfer Datasets}
\begin{table*}
\centering
\resizebox{2.11\columnwidth}{!}{
\begin{tabular}{ll|rrr|rrr|r|rrr|rr}
\toprule
Dataset & Model & \%Acc & \%Prec & \%Rec & \%UB & \%Stable & \%Inc & \%Safe & \%SO & \%NG & \%NB & \#1B & \#2B \\
\midrule
WN-hier\_nmhier
& R-GCN  & \textbf{89.0} & \textbf{87.65} & \textbf{90.88} & 100   & 0    & 0     & 0     & 0  & 0  & 100 & 0 & 0 \\
& R-0    & 79.48 & 78.18 & 82.33 & 90.91 & 0    & 9.09  & 8.18  & 0  & 0  & 100 & 5 & 513 \\
& R-25   & 71.13 & 75.03 & 66.17 & 72.73 & 0    & 27.27 & 25.45 & 8  & 2  & 90  & 10 & 947 \\
& R-50   & 69.59 & 74.63 & 61.94 & 45.45 & 4.55 & 50    & 51.82 & 30 & 10 & 60  & \textbf{36} & \textbf{3589} \\
& MGCN   & 66.87 & 74.37 & 56.38 & 0     & 0    & 100   & 100   & \textbf{52} & 48 & 0   & 31 & 3042 \\

\midrule

WN-cup\_nmhier
& R-GCN  & \textbf{83.32} & \textbf{83.36} & 83.47 & 100   & 0    & 0     & 0     & 0  & 0  & 100 & 0 & 0 \\
& R-0    & 77.22 & 77.3  & 78.38 & 90.91 & 0.91 & 8.18  & 9.09  & 0  & 0  & 100 & 3 & 320 \\
& R-25   & 71.16 & 69.6  & 76.32 & 72.73 & 5.46 & 21.82 & 26.36 & 10 & 10 & 80  & 20 & 2023 \\
& R-50   & 66.8  & 67.54 & 66.88 & 44.55 & 7.27 & 47.27 & 49.09 & 34 & 16 & 50  & 23 & 2249 \\
& MGCN   & 62.74 & 59.25 & \textbf{85.27} & 0     & 4.55 & 95.45 & 100   & \textbf{50} & 50 & 0   & \textbf{50} & \textbf{4905} \\

\bottomrule
\end{tabular}
} 
\caption{Results for the mixed LogInfer-WN datasets. \%UB, \%Stable, \%Inc, and \%Safe are the percentages of unbounded, stable, increasing, and safe channels, respectively. \%SO is the percentage of monotonic LogInfer rules that are sound for the GNN, \%NG is the percentage of rules that are not sound for the GNN since there is some grounding of the body which does not entail the head, and \%NB the percentage that are not sound since their heads correspond to unbounded channels. \#1B and \#2B are the number of sound rules with one and two body atoms respectively.}
\label{results:mixed_wn}
\end{table*}
The findings on the mixed LogInfer-WN datasets, that is, those that use both monotonic and non-monotonic rule patterns, are shown in \Cref{results:mixed_wn} and demonstrate a clear tendency:
as the number of channels required to be increasing/stable goes up (from R-GCN, to R-$X$, to MGCN), accuracy decreases, whilst the number of sound rules and percentage of sound injected LogInfer rules increase.
This demonstrates a trade-off between interpretability and empirical performance. As in the case of purely monotonic datasets, R-GCN achieves consistently superior accuracy, but provably has no sound rules, since every channel is unbounded.
Notice that on WN-cup\_nmhier, MGCN achieves high recall but low precision. This indicates that the model is learning versions of the non-monotonic rules without the negation, and thus achieving a high recall but getting penalised by false positives, since it is predicting facts that are not derived by the non-monotonic rules.
We observe that the MGCN solution does not scale to every situation, as it can lead to poor performance. Given that many real-world datasets may also have non-monotonic patterns, MGCNs could be too restrictive.

\paragraph{Channel Classification}
As can be seen across all results, the classification of channels into unbounded, stable, and increasing almost always adds up to $100\%$. This demonstrates empirically that, whilst our theoretical analysis allows for the existence of undetermined or decreasing channels that are not unbounded, virtually no such channels are found when using our proposed training paradigms, and so our proposed channel classification allows us to fully determine whether each channel shows monotonic or non-monotonic behaviour. Furthermore, across all experiments on the LogInfer datasets, for every rule used to inject facts, our rule-checking procedures were always able to classify the rule as sound or provably not sound.
Finally, the results show that the total number of channels found to be either stable or increasing is
often larger than the number of channels shown to be safe. 
This can be seen by computing the sum of the \%Stable and \%Inc columns and comparing it to \%Safe.
Although the difference is not big, this validates the need for the more complex classification of channels into stable/increasing/decreasing/undetermined, instead of the simpler classification into safe/unsafe.
\section{Conclusion} \label{sec:conclusions}
In this paper, we provided two ways to extract sound rules from sum-GNNs and a procedure to prove that certain rules are not sound for a sum-GNN. Our methods rely on classifying the output channels of the sum-GNN. We found that, in our experiments, R-GCN, a specific instance of sum-GNN, provably has no sound rules when trained in \changemarker{the standard} way, even when using ideal datasets.
We provided two alternatives to train R-GCN, the first of which clamps \changemarker{all} negative weights to zero and results in R-GCN being entirely monotonic, yielding good performance and many sensible sound rules on datasets with monotonic patterns, but poor performance on datasets with a mixture of monotonic and non-monotonic patterns.
Our second alternative yields a trade-off between accuracy and the number of sound rules. We found that, in practice, almost every channel of the sum-GNN is classified in a manner that allows us to either extract sound rules or prove that there are no associated sound rules.

\changemarker{
The limitations of this work are as follows. First, our analysis only considers GNNs with sum aggregation and monotonically increasing activation functions: it is unclear how sound rules can be extracted from models that use mean aggregation or GELU activation functions. Furthermore, our methods do not guarantee that every output channel will be characterised such that we can show that it either has no sound rules or it can be checked for sound rules.
}

For future work, we aim to consider other GNN architectures, extend our rule extraction to non-monotonic logics, \changemarker{provide relaxed definitions of soundness,} and consider R-GCN with a scoring function (such as DistMult) as a decoder, instead of using a dataset transformation to perform link prediction.

\clearpage
\bibliographystyle{kr}
\bibliography{refs}

For the purpose of Open Access, the authors have applied a CC BY public copyright licence to any Author Accepted Manuscript (AAM) version arising from this submission.

\paragraph{Acknowledgments}
Matthew Morris is funded by an EPSRC scholarship (CS2122\_EPSRC\_1339049).
This work was also supported by the SIRIUS Centre for Scalable Data Access (Research Council of Norway, project 237889), Samsung Research UK, the EPSRC projects AnaLOG (EP/P025943/1), OASIS (EP/S032347/1), UKFIRES (EP/S019111/1) and ConCur (EP/V050869/1).
The authors would like to acknowledge the use of the University of Oxford Advanced Research Computing (ARC) facility in carrying out this work \url{http://dx.doi.org/10.5281/zenodo.22558}.

\appendix
\clearpage

\section{Full Proofs}

\subsection{Proposition \ref{prop:sound_finite_containment}} \label{app:sound_finite_containment_proof}

\begin{proposition*}
\changemarker{
If $\alpha$ is a rule or program sound for sum-GNN $\mathcal{N}$, then for any dataset $D$ and $k \in \mathbb{N}$, the containment holds when $T_\alpha$ and $T_\mathcal{N}$ are composed $k$ times:
$T_\alpha^k(D) \subseteq T_{\mathcal{N}}^k(D)$.
}
\end{proposition*}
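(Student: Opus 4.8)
The plan is to proceed by induction on the number of compositions $k$. The base case $k=0$ is immediate, since $T_\alpha^0$ and $T_{\mathcal{N}}^0$ are both the identity and so $D \subseteq D$ holds trivially; equivalently one may start the induction at $k=1$, which is exactly the defining inclusion $T_\alpha(D) \subseteq T_{\mathcal{N}}(D)$ of soundness.

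The crucial observation enabling the inductive step is an asymmetry between the two operators. Although $T_{\mathcal{N}}$ need \emph{not} be monotone under dataset inclusion (indeed, this non-monotonicity is precisely the phenomenon the paper is concerned with), the Datalog operator $T_\alpha$ \emph{is} monotone: if $D_1 \subseteq D_2$, then $T_\alpha(D_1) \subseteq T_\alpha(D_2)$. This is because the rules are negation-free, so any substitution $\nu$ whose body atoms are satisfied in $D_1$ also has its body atoms satisfied in the larger $D_2$, and hence every head atom derived from $D_1$ is also derived from $D_2$; for a program this extends immediately by taking the union over its rules. I would state and justify this monotonicity of $T_\alpha$ explicitly at the outset, as it is what makes the argument go through.

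For the inductive step, assuming $T_\alpha^k(D) \subseteq T_{\mathcal{N}}^k(D)$, I would decompose the $(k+1)$-fold composition as $T_\alpha^{k+1}(D) = T_\alpha(T_\alpha^k(D))$ and then chain two inclusions. Applying monotonicity of $T_\alpha$ to the inductive hypothesis gives $T_\alpha(T_\alpha^k(D)) \subseteq T_\alpha(T_{\mathcal{N}}^k(D))$; then instantiating the soundness hypothesis at the specific dataset $T_{\mathcal{N}}^k(D)$ gives $T_\alpha(T_{\mathcal{N}}^k(D)) \subseteq T_{\mathcal{N}}(T_{\mathcal{N}}^k(D)) = T_{\mathcal{N}}^{k+1}(D)$. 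Composing these yields $T_\alpha^{k+1}(D) \subseteq T_{\mathcal{N}}^{k+1}(D)$, as required.

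The main obstacle to watch for is the direction in which monotonicity is invoked. A naive induction that tries to push the inclusion $T_\alpha^k(D) \subseteq T_{\mathcal{N}}^k(D)$ through an outer application of $T_{\mathcal{N}}$ would require monotonicity of $T_{\mathcal{N}}$, which fails in general for sum-GNNs. The argument must therefore be structured so that the monotone operator $T_\alpha$ is always the one applied to the inductive inclusion, while soundness is applied only once, at the outermost layer, to the GNN-generated dataset $T_{\mathcal{N}}^k(D)$.
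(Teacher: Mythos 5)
Your proof is correct and follows essentially the same route as the paper's: induction on $k$, with the inductive step chaining monotonicity of $T_\alpha$ applied to the inductive inclusion together with a single application of soundness at the dataset $T_{\mathcal{N}}^k(D)$. Your explicit justification of the monotonicity of $T_\alpha$ and your warning that one must not push the inclusion through $T_{\mathcal{N}}$ (which is not monotone for sum-GNNs) are exactly the right points of care, and they match the structure of the paper's argument.
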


\begin{proof}
Let $D$ be an arbitrary dataset.
We prove the claim by induction on $k$.
The base case of $k = 1$ follows trivially, since $T_\alpha(D) \subseteq T_{\mathcal{N}}(D)$ holds from the soundness of $\alpha$.

For the inductive step, let $k \geq 2$.

Let $D_\alpha = T_\alpha^{k-1}(D)$ and $D_\mathcal{N} = T_{\mathcal{N}}^{k-1}(D)$.
Then from the soundness of $\alpha$,
$T_\alpha(D_\mathcal{N}) \subseteq T_\mathcal{N}(D_\mathcal{N})$,
and from the induction hypothesis,
$D_\alpha \subseteq D_\mathcal{N}$.

But then $T_\alpha(D_\alpha) \subseteq T_\alpha(D_\mathcal{N})$, since operator $T_\alpha$ is monotonic.
So we have $T_\alpha(D_\alpha) \subseteq T_\alpha(D_\mathcal{N}) \subseteq T_\mathcal{N}(D_\mathcal{N})$, and thus
$T_\alpha^k(D) \subseteq T_\mathcal{N}^k(D)$.
\end{proof}

\subsection{Lemma \ref{lemma:updown_monotonic}} \label{app:updown_monotonic_proof}
\begin{lemma*}
Let $\mathcal{N}$ be a sum-GNN of $L$ layers, and let $D', D$ be datasets satisfying $D' \subseteq D$.
For each vertex $v \in \texttt{enc}(D')$, layer $\ell \in \{0, \dots, L\}$, and channel $i \in \{1, \dots, \delta_{\ell}\}$,
the following hold:
\begin{itemize}
\item If $i$ is stable at layer $\ell$, then $\mathbf{v}_\ell'[i] = \mathbf{v}_\ell[i]$;
\item If $i$ is increasing at layer $\ell$, then $\mathbf{v}_\ell'[i] \leq \mathbf{v}_\ell[i]$, and 
\item If $i$ is decreasing at layer $\ell$, then $\mathbf{v}_\ell'[i] \geq \mathbf{v}_\ell[i]$,
\end{itemize}
where $\mathbf{v}_\ell$ and $\mathbf{v}_\ell'$ are the vectors induced for $v$ in layer $\ell$ by applying $\mathcal{N}$ to $D$ and $D'$ respectively.
\end{lemma*}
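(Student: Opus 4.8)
The plan is to prove all three statements simultaneously by induction on the layer $\ell$, mirroring the structure of \Cref{lemma:nabn_monotonic} but tracking the three channel types at once. For the base case $\ell = 0$, every channel is increasing by definition, so I only need $\mathbf{v}_0'[i] \leq \mathbf{v}_0[i]$; this follows because the canonical encoding sets $\mathbf{v}_0'[i] = 1$ exactly when $U_i(a) \in D'$, and $D' \subseteq D$ forces $U_i(a) \in D$, hence $\mathbf{v}_0[i] = 1$, so the two Boolean values satisfy the required inequality.

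For the inductive step I would fix a channel $i$ at layer $\ell$ and split into the three cases (stable, increasing, decreasing); the undetermined case carries no obligation. In each case I decompose the update \eqref{align:def_gnn_update} into the self-term $(\mathbf{A}_\ell \mathbf{v}_{\ell-1})[i]$, the aggregation term, and the common bias, and compare the contributions from $D$ and from $D'$ term by term over the source channels $j$ at layer $\ell-1$. For stable $i$, \Cref{def:updown} forces $\mathbf{B}_\ell^c[i,j] = 0$ everywhere and $\mathbf{A}_\ell[i,j] \neq 0$ only when $j$ is stable at $\ell - 1$, so the induction hypothesis (equality on stable $j$) makes the self-term identical and the aggregation term zero, giving exact equality. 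For increasing $i$, I would verify for each $j$ that $\mathbf{A}_\ell[i,j]\mathbf{v}_{\ell-1}'[j] \leq \mathbf{A}_\ell[i,j]\mathbf{v}_{\ell-1}[j]$: if $j$ is increasing the sign condition gives $\mathbf{A}_\ell[i,j] \geq 0$ and the hypothesis gives $\mathbf{v}_{\ell-1}'[j] \leq \mathbf{v}_{\ell-1}[j]$; if $j$ is decreasing the non-positive coefficient flips the reversed hypothesis inequality; if $j$ is undetermined the coefficient is zero; and if $j$ is stable the hypothesis gives equality regardless of the sign of $\mathbf{A}_\ell[i,j]$. Summing over $j$ yields $(\mathbf{A}_\ell \mathbf{v}_{\ell-1}')[i] \leq (\mathbf{A}_\ell \mathbf{v}_{\ell-1})[i]$.

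The part I expect to require the most care is the neighbourhood-aggregation term, because it combines two distinct monotonicity effects. For decreasing or undetermined $j$, condition (2)/(3) forces $\mathbf{B}_\ell^c[i,j] = 0$, so those channels drop out. For increasing or stable $j$, condition (4) gives $\mathbf{B}_\ell^c[i,j] \geq 0$, and the hypothesis gives $\mathbf{u}_{\ell-1}'[j] \leq \mathbf{u}_{\ell-1}[j]$ for every $u \in \texttt{enc}(D')$ (with equality when $j$ is stable). To pass from a per-vertex inequality to an inequality between the two aggregates I would crucially use that $(E^c)' \subseteq E^c$ together with the fact that every feature value is non-negative (the range of each $\sigma_\ell$ lies in $\mathbb{R}^+ \cup \{0\}$): extending the sum to the larger edge set $E^c$ can only add non-negative terms, so $\sum_{(v,u)\in(E^c)'} \mathbf{u}_{\ell-1}'[j] \leq \sum_{(v,u)\in E^c} \mathbf{u}_{\ell-1}[j]$, and multiplying by the non-negative coefficient $\mathbf{B}_\ell^c[i,j]$ preserves it. Adding the self-term, the aggregation term, and the common bias, and then applying the monotonically increasing $\sigma_\ell$, yields $\mathbf{v}_\ell'[i] \leq \mathbf{v}_\ell[i]$. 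The decreasing case is entirely symmetric, with every sign condition and every inequality reversed, so I would handle it by the same argument with the roles of the bounds swapped rather than writing it out afresh.
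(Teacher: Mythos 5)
Your proposal is correct and follows essentially the same route as the paper's own proof: induction on $\ell$ with a case split into stable, increasing, and decreasing channels, a term-by-term comparison of the self-term $(\mathbf{A}_\ell \mathbf{v}_{\ell-1})[i]$, the aggregation term, and the bias, and the decreasing case handled by symmetry. You are, if anything, slightly more explicit than the paper at the one delicate step---that enlarging the neighbourhood sum from $(E^c)'$ to $E^c$ only adds non-negative terms because each $\sigma_\ell$ has non-negative range---which the paper covers only with a brief parenthetical remark.
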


\begin{proof}
We show the claim of the lemma by induction on $\ell$. The base case $\ell=0$ holds trivially by the definition of the canonical encoding, the fact that $D' \subseteq D$, and all channels being increasing at layer $\ell=0$.

For the inductive step, we assume that the claim holds for $\ell -1\geq 0$. We prove that it holds at layer $\ell$ in three parts. Defining $E^c, (E^c)'$ to be the edges of $\texttt{enc}(D), \texttt{enc}(D')$ respectively, we have that $(E^c)' \subseteq E^c$.

\begin{center}
\textbf{Stable Channels}
\end{center}

Assume that channel $i$ is stable at layer $\ell$. We show that $\mathbf{v}_\ell'[i] = \mathbf{v}_\ell[i]$. Consider the computation of $\mathbf{v}_\ell[i]$ and $\mathbf{v}_\ell'[i]$ as per \Cref{align:def_gnn_update}. The value $(\mathbf{A}_\ell \mathbf{v}_{\ell-1})[i]$ is the sum over all $j \in \{ 1, ..., \delta_{\ell-1} \}$ of $\mathbf{A}_\ell[i, j] \mathbf{v}_{\ell-1}[j]$. But since $i$ is stable, each such $j$ that is unstable at layer $\ell-1$ has $\mathbf{A}_\ell[i, j] = 0$.

Thus, $(\mathbf{A}_\ell \mathbf{v}_{\ell-1})[i]$ is equal to the sum over each channel $j$ that is stable at layer $\ell-1$.
By induction, each such stable $j$ satisfies $\mathbf{v}_{\ell-1}'[j] = \mathbf{v}_{\ell-1}[j]$. Thus $\mathbf{A}_\ell[i, j] \mathbf{v}_{\ell-1}'[j] = \mathbf{A}_\ell[i, j] \mathbf{v}_{\ell-1}[j]$, and therefore $(\mathbf{A}_\ell \mathbf{v}_{\ell-1}')[i] = (\mathbf{A}_\ell \mathbf{v}_{\ell-1})[i]$.

Furthermore, for all $j$ (both stable and unstable), $\mathbf{B}_\ell^c[i, j] = 0$ for each colour $c \in \text{$\col$}$. Thus, we can derive
$ (\sum_{c \in \text{$\col$}} \mathbf{B}_\ell^c \sum_{(v,u) \in (E^c)'} \mathbf{u}'_{\ell-1} )[i] = 0$
and
$(\sum_{c \in \text{$\col$}} \mathbf{B}_\ell^c \sum_{(v,u) \in E^c} \mathbf{u}_{\ell-1} )[i] = 0$.

This means that $\mathbf{v}_\ell'[i] = \mathbf{v}_\ell[i]$, as required.

\begin{center}
\textbf{Increasing Channels}
\end{center}

Assume that channel $i$ is increasing at layer $\ell$. We show that $\mathbf{v}_\ell'[i] \leq \mathbf{v}_\ell[i]$. As before, consider the computation of $\mathbf{v}_\ell[i]$ and $\mathbf{v}_\ell'[i]$ as per \Cref{align:def_gnn_update}. The value $(\mathbf{A}_\ell \mathbf{v}_{\ell-1})[i]$ is the sum over all $j \in \{ 1, ..., \delta_{\ell-1} \}$ of $\mathbf{A}_\ell[i, j] \mathbf{v}_{\ell-1}[j]$.

Now consider the four possibilities for $j$ at $\ell-1$ and conditions (1) - (3) in the definitions. If $j$ is stable, then by induction, $v_{\ell-1}'[j] = v_{\ell-1}[j]$. If $j$ is increasing, then from (1) we have $\mathbf{A}_\ell[i,j] \geq 0$ and by induction $v_{\ell-1}'[j] \leq v_{\ell-1}[j]$. If $j$ is decreasing, then from (2) we have $\mathbf{A}_\ell[i,j] \leq 0$ and again by induction $v_{\ell-1}'[j] \geq v_{\ell-1}[j]$. Finally, if $j$ is undetermined, then from (3) we obtain $v_{\ell-1}'[j] = 0 = v_{\ell-1}[j]$.

In any of these cases, we find that $\mathbf{A}_\ell[i, j] \mathbf{v}_{\ell-1}'[j] \leq \mathbf{A}_\ell[i, j] \mathbf{v}_{\ell-1}[j]$, and thus we can conclude that $(\mathbf{A}_\ell \mathbf{v}_{\ell-1}')[i] \leq (\mathbf{A}_\ell \mathbf{v}_{\ell-1})[i]$.

Now let $c \in \text{$\col$}$ and consider the matrix product involving $\mathbf{B}_\ell^c$, which is a sum over all $j \in \{ 1, ..., \delta_{\ell-1} \}$. If $j$ is decreasing or undetermined at $\ell-1$ then from (2) and (3) we have that $\mathbf{B}_\ell^c[i, j] = 0$. Then trivially

$$ \mathbf{B}_\ell^c[i, j] ( \sum_{(v,u) \in (E^c)'} \mathbf{u}'_{\ell-1} )[j] \leq \mathbf{B}_\ell^c[i, j] ( \sum_{(v,u) \in E^c} \mathbf{u}_{\ell-1} )[j] . $$

On the other hand, if $\mathbf{B}_\ell^c[i, j] \not= 0$, then by elimination $j$ must be increasing or stable at $\ell-1$, which by induction means that $\mathbf{u}'_{\ell-1}[j] \leq \mathbf{u}_{\ell-1}[j]$ for every node $u \in \texttt{enc}(D')$.

Then since $(E^c)' \subseteq E^c$, we can conclude that $ (\sum_{(v,u) \in (E^c)'} \mathbf{u}'_{\ell-1} )[j] \leq ( \sum_{(v,u) \in E^c} \mathbf{u}_{\ell-1} )[j]$. Note that both of these sums are positive. Also, since from (4) we have that $\mathbf{B}_\ell^c[i, j] \geq 0$, we find that for all $j$,

$$ \mathbf{B}_\ell^c[i, j] ( \sum_{(v,u) \in (E^c)'} \mathbf{u}'_{\ell-1} )[j] \leq \mathbf{B}_\ell^c[i, j] ( \sum_{(v,u) \in E^c} \mathbf{u}_{\ell-1} )[j] . $$

$$ ( \mathbf{B}_\ell^c \sum_{(v,u) \in (E^c)'} \mathbf{u}'_{\ell-1} )[i] \leq ( \mathbf{B}_\ell^c \sum_{(v,u) \in E^c} \mathbf{u}_{\ell-1} )[i] . $$

and thus that

$$ (\sum_{c \in \text{$\col$}} \mathbf{B}_\ell^c \sum_{(v,u) \in (E^c)'} \mathbf{u}'_{\ell-1} )[i] \leq (\sum_{c \in \text{$\col$}} \mathbf{B}_\ell^c \sum_{(v,u) \in E^c} \mathbf{u}_{\ell-1} )[i] . $$

Bringing the above together, along with the fact that $\sigma_\ell$ is monotonically increasing, we obtain $\mathbf{v}_\ell'[i] \leq \mathbf{v}_\ell[i]$.

\begin{center}
\textbf{Decreasing Channels}
\end{center}

This argument is very similar to the one for increasing channels, as the definitions are almost symmetric. If channel $i$ is decreasing at layer $\ell$, we obtain that $\mathbf{v}_\ell'[i] \leq \mathbf{v}_\ell[i]$.

This concludes the proof by induction.
\end{proof}

\subsection{Proposition \ref{prop:updown_sound}} \label{app:updown_sound_proof}
\begin{proposition*}
Let $\mathcal{N}$ be a sum-GNN as in \Cref{eq:GNN}, and let $r$  be a rule of the form \eqref{eq:ruleform} where $H$ mentions a unary predicate $U_p$. Let $S$ be an arbitrary set of as many constants as there are variables in $r$.
Assume channel $p$ in $\mathcal{N}$ is increasing or stable at layer $L$.
Then $r$ is sound for $T_\mathcal{N}$ if and only if
$H\nu \in T_\mathcal{N}(D_r^{\nu})$
for each substitution $\nu$ mapping the variables of $r$ to constants in $S$ and such that $D_r^{\nu} \models B_i\nu$ for each inequality $B_i$ in r.
\end{proposition*}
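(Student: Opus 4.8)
The plan is to adapt the proof of \Cref{prop:nabn_sound} almost verbatim, replacing its single use of \Cref{lemma:nabn_monotonic} with \Cref{lemma:updown_monotonic}. For the forward (``if'') direction, I would fix an arbitrary dataset $D$ and an arbitrary fact $H\mu \in T_r(D)$, where $\mu$ satisfies $D \models B_i\mu$ for every body literal $B_i$ of $r$, and aim to show $H\mu \in T_\mathcal{N}(D)$. As before, the key device is renaming: I would choose an injection $\sigma$ from the co-domain of $\mu$ into $S$ (which exists because $S$ has at least as many constants as $r$ has variables) and set $\nu = \sigma \circ \mu$. Injectivity of $\sigma$ guarantees that $D_r^\nu \models B_i\nu$ for each body inequality $B_i$, so the hypothesis of the proposition applies and yields $H\nu \in T_\mathcal{N}(D_r^\nu)$.

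Next I would transfer this back along the renaming. Since $T_\mathcal{N}$ depends only on the structure of its input and not on the identity of constants, it is invariant under one-to-one renamings, so $H\nu \in T_\mathcal{N}(D_r^\nu)$ gives $H\mu \in T_\mathcal{N}(D_r^\mu)$. This is the point where the argument diverges from \Cref{prop:nabn_sound}. Writing $a$ for the single constant in $H\mu$, and letting $\mathbf{v}_L'$ and $\mathbf{v}_L$ be the layer-$L$ feature vectors computed for the vertex of $a$ on $D_r^\mu$ and $D$ respectively, I would invoke \Cref{lemma:updown_monotonic} in place of \Cref{lemma:nabn_monotonic}. The crucial observation is that whether $p$ is \emph{increasing} or \emph{stable} at layer $L$, that lemma delivers the same one-directional inequality $\mathbf{v}_L'[p] \leq \mathbf{v}_L[p]$ (equality in the stable case, the non-strict inequality in the increasing case), using $D_r^\mu \subseteq D$.

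With this inequality in hand the remainder is routine: from $H\mu \in T_\mathcal{N}(D_r^\mu)$ I obtain $\texttt{cls}_t(\mathbf{v}_L'[p]) = 1$, hence $\mathbf{v}_L'[p] \geq t$; combining with $\mathbf{v}_L'[p] \leq \mathbf{v}_L[p]$ gives $\mathbf{v}_L[p] \geq t$, so $\texttt{cls}_t(\mathbf{v}_L[p]) = 1$ and therefore $H\mu \in T_\mathcal{N}(D)$. As $H\mu$ was an arbitrary element of $T_r(D)$ and $D$ was arbitrary, this establishes soundness. The ``only if'' direction is immediate and identical to the earlier proposition: if $H\nu \notin T_\mathcal{N}(D_r^\nu)$ for some admissible $\nu$, then $D_r^\nu$ already witnesses $T_r(D_r^\nu) \not\subseteq T_\mathcal{N}(D_r^\nu)$, so $r$ is unsound.

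I do not expect a genuine obstacle, since the structure is inherited wholesale from \Cref{prop:nabn_sound}. The one conceptual point to get right is that the two cases of the monotonicity lemma must both collapse to the single direction $\mathbf{v}_L'[p] \leq \mathbf{v}_L[p]$; a \emph{decreasing} channel would instead yield $\mathbf{v}_L'[p] \geq \mathbf{v}_L[p]$, the wrong direction, which would break the final classification step. This is precisely why the hypothesis restricts $p$ to increasing or stable channels.
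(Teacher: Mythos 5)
Your proposal is correct and matches the paper's own proof essentially verbatim: the paper likewise reuses the argument of \Cref{prop:nabn_sound} with the renaming substitution $\nu$, invariance of $T_\mathcal{N}$ under injective constant renamings, and \Cref{lemma:updown_monotonic} in place of \Cref{lemma:nabn_monotonic}, noting that both the stable case ($\mathbf{v}_L'[p] = \mathbf{v}_L[p]$) and the increasing case yield the needed inequality $\mathbf{v}_L'[p] \leq \mathbf{v}_L[p]$. Your closing observation about why decreasing channels are excluded is also exactly the right conceptual point.
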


\begin{proof}

To prove the soundness of $r$, consider an arbitrary dataset $D$. We show that $T_r(D) \subseteq T_\mathcal{N}(D)$. To this end, we consider an arbitrary fact in $T_r(D)$ and show that it is also contained in 
$T_\mathcal{N}(D)$. By the definition of the immediate consequence operator $T_r$, this fact is of the form $H\mu$, where $\mu$ is a substitution from the variables of $r$ to constants 
in $D$ satisfying $D \models B_i\mu$ for each body literal $B_i$ of $r$. Let 
$\sigma$ be an arbitrary one-to-one mapping from the co-domain of $\mu$ to some subset of $S$; such a mapping exists because $S$ has as least as many constants as variables in $r$. Let $\nu$ be the composition of $\mu$ and $\sigma$.

Observe that for each body inequality $B_i$ of $r$, we have $D_r^{\nu} \models B_i \nu$
because $D \models B_i \mu$
and $\sigma$ is injective. 
Therefore, by the assumption of the proposition, $H\nu \in T_{\mathcal{N}}(D_r^{\nu})$.
Now, observe that the result of applying $T_{\mathcal{N}}$ to a dataset does not depend on the identity of the constants, but only on the structure of the dataset; therefore, $T_{\mathcal{N}}$ is invariant under one-to-one mappings of constants, and so $H\nu \in T_{\mathcal{N}}(D_r^{\nu})$ implies
$H\mu \in T_{\mathcal{N}}(D_r^{\mu})$.
Now, let $a$ be the single constant in $H \mu$.
Since $D^r_{\mu} \subseteq D$ by definition of $\mu$, and channel $p$ is increasing or stable at layer $L$, 
we can apply \Cref{lemma:updown_monotonic} to conclude that $\mathbf{v}_L'[p] \leq \mathbf{v}_L[p]$, for $v$ the node corresponding to $a$ in $\texttt{enc}(D^r_{\mu})$. But $H \mu \in T_\mathcal{N}(B \mu)$ implies that $\texttt{cls}_t(\mathbf{v}_L'[p])=1$ and so 
$\mathbf{v}_L'[p] \geq t$.
Hence, $\mathbf{v}_L[p] \geq t$, and so
$\texttt{cls}_t(\mathbf{v}_L[p])=1$,
which implies that $H \mu \in T_\mathcal{N}(D)$, as
we wanted to show.

If on the other hand, $H\nu \not \in T_\mathcal{N}(D_r^\nu)$ for some $\nu$, then $T_r(D_r^\nu) \not\subseteq T_\mathcal{N}(D_r^\nu)$, since $H\nu \in T_r(D_r^\nu)$. Thus, $r$ is not sound for $T_\mathcal{N}$.
\end{proof}

\subsection{Theorem \ref{thm:channel_relations}} \label{app:channel_relations_proof}
\begin{theorem*}
\changemarker{
Safe channels are increasing or stable. Increasing channels are not decreasing. There exist increasing unsafe channels and stable unsafe channels.
}
\end{theorem*}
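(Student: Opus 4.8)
The plan is to establish the three assertions in turn. For the inclusion that every safe channel is increasing or stable, I would induct on the layer $\ell$, matching the inductive definitions \Cref{def:nabn} and \Cref{def:updown}. The base case $\ell=0$ is immediate, since every channel is simultaneously safe and increasing. For the inductive step I fix a channel $i$ that is safe at layer $\ell$ and take as inductive hypothesis that every channel safe at $\ell-1$ is increasing or stable; contrapositively, every channel that is decreasing or undetermined at $\ell-1$ is unsafe at $\ell-1$. If $i$ is stable at $\ell$ there is nothing to show, so I assume it is not and verify conditions (1)--(4) of the increasing case. Safety of $i$ gives that the $i$-th rows of $\mathbf{A}_\ell$ and of each $\mathbf{B}_\ell^c$ are non-negative, which immediately yields condition (1) for increasing $j$ and condition (4) for all $j$. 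Safety also forces a zero in every column $j$ that is unsafe at $\ell-1$; by the contrapositive of the hypothesis this covers every $j$ that is decreasing or undetermined, giving $\mathbf{A}_\ell[i,j]=0$ and $\mathbf{B}_\ell^c[i,j]=0$ there, which is exactly conditions (2) and (3). Thus $i$ is increasing, closing the induction.

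For the claim that an increasing channel is never decreasing, I would expand the remark following \Cref{def:updown}. Supposing a channel $i$ at layer $\ell$ met the conditions for both orientations, reading (1)--(4) with both signs at once forces $\mathbf{A}_\ell[i,j]=0$ for every non-stable $j$ and $\mathbf{B}_\ell^c[i,j]=0$ for every $j$ and $c$. These are precisely the two requirements defining a stable channel, so $i$ would be stable; but both the increasing and the decreasing definitions explicitly exclude stable channels, a contradiction.

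For the two existence statements I would exhibit small sum-GNNs. The key observation is that neither witness can live at layer $1$: since at layer $0$ every channel is increasing and safe, an increasing (resp.\ stable) channel at layer $1$ is forced to be safe as well, so both witnesses need two layers. For an increasing unsafe channel, I first make a layer-$1$ channel $j$ decreasing by a single weight $\mathbf{A}_1[j,k]=-1$ from an (increasing) layer-$0$ channel $k$, then take a layer-$2$ channel $i$ with $\mathbf{A}_2[i,j]=-1$ and zero $\mathbf{B}$-entries; condition (2) certifies $i$ as increasing, while its negative entry makes it unsafe. For a stable unsafe channel, I first obtain a stable and safe layer-$1$ channel $j$ by giving it an entirely zero incoming row, so its value is the constant $\sigma_1(\mathbf{b}_1[j])$, and then take a layer-$2$ channel $i$ with $\mathbf{A}_2[i,j]=-1$ and zero $\mathbf{B}$-entries; since $\mathbf{A}_2[i,\cdot]$ is non-zero only in the stable column $j$, channel $i$ is stable, yet the negative entry again renders it unsafe.

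The step demanding the most care is the inductive step of the first claim, because safety and the increasing/stable classification quantify over the previous layer differently: soundness of the argument rests on converting ``column $j$ is unsafe'' (what safety controls) into ``column $j$ is decreasing or undetermined'' (what the increasing conditions constrain) via the contrapositive of the hypothesis, and on noticing that stable columns at $\ell-1$ impose no $\mathbf{A}$-constraint and are handled purely by the row's non-negativity. The existence results are then routine, the only genuine content being the observation that both witnesses are impossible at a single layer.
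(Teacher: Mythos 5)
Your proposal is correct and follows essentially the same route as the paper's proof: the same layer-by-layer induction for the inclusion of safe channels (your contrapositive reformulation of the hypothesis is logically identical to the paper's case split on $j$ safe versus unsafe), the same stability contradiction for the disjointness of increasing and decreasing, and two-layer witnesses with zero $\mathbf{B}$-matrices and a $-1$ entry in $\mathbf{A}_2$ that match the paper's examples up to filling the matrices. Your added observation that no witness can exist at layer $1$ (since every increasing or stable channel at layer $1$ is forced to be safe) is a nice justification, absent from the paper, for why the examples require two layers.
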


\begin{proof}
Consider a sum-GNN $\mathcal{N}$ as in \Cref{eq:GNN}. We prove the Theorem in three parts.

\begin{center}
\textbf{Safe channels are increasing or stable}
\end{center}

We prove this by induction on $\ell \in \{ 0, ..., L \}$: we show that if channel $i \in \{ 1, ..., \delta_\ell \}$ is safe at layer $\ell \in \{ 0, ..., L \}$, then it is increasing or stable at layer $\ell$.

Base case: $\ell = 0$. This is trivial, since all channels are increasing at layer $0$.

For the inductive step, consider some layer $\ell \in \{ 1, ..., L \}$ such that channel $i \in \{ 1, ..., \delta_\ell \}$ is safe at layer $\ell$.
Then the $i$-th row of each matrix $\mathbf{A}_\ell$ and $\{(\mathbf{B}_\ell^c)\}_{c \in \col}$ contains only non-negative values and, additionally, the $j$-th element in each such row is zero for each $j \in \{1, \dots, \delta_{\ell-1}\}$ such that $j$ is unsafe in layer $\ell-1$.

We prove that if $i$ is not stable, then it is increasing. To this end, we need to show that
conditions (1) - (4) in \Cref{def:updown} are satisfied for each $j \in \{ 1, ..., \delta_{\ell - 1} \}$. Condition (4) holds for any $j$, since the $i$-th row of each matrix $\{(\mathbf{B}_\ell^c)\}_{c \in \col}$ contains only non-negative values.

If $j$ is safe at layer $\ell-1$ then by the inductive hypothesis it is increasing or stable at layer $\ell-1$. So conditions (2) and (3) hold vacuously. Condition (1) holds since its consequent holds.

If on the other hand, $j$ is unsafe at layer $\ell-1$, then we have $\mathbf{A}_\ell[i, j] = 0$ and $\mathbf{B}_\ell^c[i, j] = 0$ for each $c \in \col$. So conditions (1) - (3) are satisfied by virtue of their consequents holding.

\begin{center}
\textbf{Increasing channels are not decreasing}
\end{center}

Consider a channel $i \in \{ 1, ..., \delta_\ell \}$ at layer $\ell \in \{ 1, ..., L \}$. Assume to the contrary that $i$ is increasing and decreasing at layer $\ell$.

Then from \Cref{def:updown}, $i$ is not stable.
Furthermore, for each $j \in \{ 1, ..., \delta_{\ell - 1} \}$, if $j$ is not stable at layer $\ell - 1$, then it is increasing, decreasing, or undetermined. 
Thus, from conditions (1) - (3) in the definition, we have that for all non-stable $j \in \{ 1, ..., \delta_{\ell - 1} \}$, $\mathbf{A}_\ell[i, j] = 0$, since $i$ is both increasing and decreasing.

Finally, from condition (4), we have that $\mathbf{B}_\ell^c[i, j] = 0$ for each $c \in \col$ and each $j \in \{ 1, ..., \delta_{\ell - 1} \}$.
This satisfies the conditions for stability in \Cref{def:updown}, implying that channel $i$ is stable at layer $\ell$, which is a contradiction.

\begin{center}
\textbf{There exist increasing unsafe channels and stable unsafe channels}
\end{center}

For simplicity, we provide examples of sum-GNNs with such unsafe channels using 2-layer GNNs with $\delta_0 = \delta_1 = \delta_2 = 2$ and $|\col| = 1$. However, the examples can be extended to sum-GNNs with an arbitrary number of layers, arbitrary $\delta$, and arbitrary number of colours.

Let $c \in \col$ be the unique colour in $\col$.
First, to show that there exist unsafe channels that are increasing, consider a 2-layer sum-GNN with the the following matrices. Let $\mathbf{B}_{1}^c = \mathbf{B}_{2}^c$ be $2 \times 2$ matrices filled with zeroes. Define $\mathbf{A}_{1}$ and $\mathbf{A}_{2}$ as follows:

$$
\mathbf{A}_{1} = \begin{pmatrix}
-1 & -1 \\
-1 & -1 \\
\end{pmatrix},~~~~~~~
\mathbf{A}_{2} = \begin{pmatrix}
-1 & -1 \\
-1 & -1 \\
\end{pmatrix}
$$

Trivially, no channels are safe. Also, both channels are decreasing at layer $1$, with all negative values in $\mathbf{A}_{2}$. Thus condition (2) of \Cref{def:updown} is satisfied and both channels are increasing at layer $2$.

Next, to show that there exist unsafe channels that are stable, consider a 2-layer sum-GNN with the the following matrices. Let $\mathbf{B}_{1}^c = \mathbf{B}_{2}^c$ be $2 \times 2$ matrices filled with zeroes. Define $\mathbf{A}_{1}$ and $\mathbf{A}_{2}$ as follows:

$$
\mathbf{A}_{1} = \begin{pmatrix}
0 & 0 \\
0 & 0 \\
\end{pmatrix},~~~~~~~
\mathbf{A}_{2} = \begin{pmatrix}
-1 & -1 \\
-1 & -1 \\
\end{pmatrix}
$$

Both channels are safe at layer $1$ and both are unsafe at layer $2$, since $\mathbf{A}_{2}$ only contains non-zero values. Also, both channels are stable at layer $1$, with all zero values in $\mathbf{B}_{1}^c = \mathbf{B}_{2}^c$. Thus, the conditions for stability in \Cref{def:updown} are satisfied and both channels are stable at layer $2$.

The above examples can be extended to an arbitrary number of layers $L \geq 2$, dimensions $\delta_0, ..., \delta_L$, and colour set $\col$ by using the following scheme:
\begin{enumerate}
\item For each $\ell \in \{ 1, ..., L \}$ and $c \in \col$, let $\mathbf{B}_{\ell}^c$ a matrix of zeroes.
\item Let every entry in the matrix $\mathbf{A}_{L}$ be $-1$.
\item For each $\ell \in \{ 1, ..., L - 1 \}$, define the matrix $\mathbf{A}_{\ell}$ as follows:

\begin{enumerate}
\item To obtain the case where all channels are both unsafe and increasing at $L$, let every entry in the matrix $\mathbf{A}_{\ell}$ be $-1$, as in the example given above.
\item To obtain the case where all channels are both unsafe and stable at $L$, let every entry in the matrix $\mathbf{A}_{\ell}$ be $0$, also as in the example given above.
\end{enumerate}

\end{enumerate}

\end{proof}

\subsection{Theorem \ref{thm:neginfline_sound}} \label{app:neginfline_sound_proof}
\begin{theorem*}
Let $\mathcal{N}$ be a sum-GNN as in Equation \eqref{eq:GNN}, where 
$\sigma_{\ell}$ is ReLU for each $1 \leq \ell \leq L-1$,
and the co-domain of $\sigma_L$ contains a number strictly less than the threshold $t$ of the classification function $\texttt{cls}_t$.
Then, each rule with head predicate $U_p$ corresponding to an unbounded channel $p$ is unsound for $\mathcal{N}$.
\end{theorem*}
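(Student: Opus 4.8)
The plan is to prove unsoundness by exhibiting, for an arbitrary rule $r$ with head $U_p(x)$, a single dataset on which the rule fires but the GNN does not predict the head fact. First I would fix a substitution $\nu$ sending each variable of $r$ to its own fresh constant and set $a := \nu(x)$; writing $D_r^{\nu}$ for the instantiated body, the fact $U_p(a)$ lies in $T_r(D_r^{\nu})$ and all body inequalities hold because $\nu$ is injective. It therefore suffices to build an extension $D_{d^*}$ of $D_r^{\nu}$ with $U_p(a) \notin T_{\mathcal{N}}(D_{d^*})$, since then $T_r(D_{d^*}) \not\subseteq T_{\mathcal{N}}(D_{d^*})$. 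I would drive the layer-$L$ value of channel $p$ at the vertex $v_a$ below the threshold $t$ using the Boolean witness $\mathbf{y}_0$ and the colour sequence $c_1, \dots, c_L$ supplied by \Cref{def:neginfline}.

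The gadget I would attach to $v_a$ is a directed, $c_1, \dots, c_L$-coloured path of length $L$ terminating at $v_a$, whose source vertex is replicated $d$ times into fresh vertices $u_1, \dots, u_d$, each carrying exactly the unary features encoded by $\mathbf{y}_0$ (all other new vertices carrying no features). Because aggregation is by summation, the $d$ identical sources contribute $d \cdot \mathbf{y}_0$ to the first path vertex, so the signal propagated along the path scales linearly in $d$. Writing $D_d$ for the family of such extensions, the heart of the argument is the inductive invariant (Condition \ref{cond:linearform}): for each $\ell \le L-1$ there is a threshold $d_\ell$ such that, for all $d \ge d_\ell$, the layer-$\ell$ feature of the $\ell$-th path vertex equals $\mathbf{g}_\ell + d \cdot \mathbf{y}_\ell$ for a fixed $d$-independent vector $\mathbf{g}_\ell$, matching the recurrence $\mathbf{y}_\ell = \text{ReLU}(\mathbf{B}_\ell^{c_\ell}\mathbf{y}_{\ell-1})$.

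I would establish this invariant by induction on $\ell$, and I expect it to be the main obstacle. The pre-activation of coordinate $j$ at the $\ell$-th path vertex splits into a single $d$-dependent term $d \cdot (\mathbf{B}_\ell^{c_\ell}\mathbf{y}_{\ell-1})[j]$, coming from the amplified predecessor via colour $c_\ell$, plus terms independent of $d$; the delicate point is to argue that the self-term through $\mathbf{A}_\ell$ and the remaining colour aggregations contribute nothing $d$-dependent, which holds because the $\ell$-th path vertex sits at path-distance $\ell$ from the replicated sources and hence its features below layer $\ell$ cannot yet depend on $d$. Once the $d$-term dominates the bounded remainder, $\text{ReLU}$ acts coordinatewise exactly as the recurrence predicts: it passes a positive coefficient, zeroes a negative one, and leaves a vanishing one $d$-independent. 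Choosing each $d_\ell$ large enough to make the dominant term outweigh the remainder closes the induction.

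Finally, I would feed the invariant into the layer-$L$ computation at $v_a$: its channel-$p$ pre-activation is a $d$-independent quantity plus $d \cdot (\mathbf{B}_L^{c_L}\mathbf{y}_{L-1})[p]$, whose coefficient is strictly negative by the definition of an unbounded channel, so the pre-activation tends to $-\infty$. Since $\sigma_L$ is monotonically increasing and its co-domain contains some value below $t$, every sufficiently negative input is mapped below $t$; picking the least such $d^* \ge \max_\ell d_\ell$ yields $\texttt{cls}_t(\mathbf{v}_L[p]) = 0$ at $v_a$, so $U_p(a) \notin T_{\mathcal{N}}(D_{d^*})$ while $U_p(a) \in T_r(D_{d^*})$, which refutes soundness of $r$.
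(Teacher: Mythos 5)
Your proposal is correct and follows essentially the same route as the paper's proof: the identical gadget ($d$ replicated source vertices labelled by $\mathbf{y}_0$ feeding a $c_1,\dots,c_L$-coloured path into $v_a$), the same inductive invariant that the $\ell$-th path vertex's layer-$\ell$ feature has the form $\mathbf{g}_\ell + d\cdot\mathbf{y}_\ell$ for all $d \geq d_\ell$, the same sign-based case analysis of how ReLU interacts with the dominant $d$-term, and the same final step driving the channel-$p$ pre-activation at $v_a$ below a preimage of a value under $t$ via monotonicity of $\sigma_L$. No gaps beyond those the paper itself leaves informal (e.g.\ the distance argument showing all other contributions are $d$-independent, which you state at the same level of detail).
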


\begin{proof}
Assume to the contrary that there exists some rule $r$ with head $U_p(x)$ that is sound for $T_\mathcal{N}$. Let $\nu$ be an arbitrary variable substitution grounding the body of $r$.
Since $p$ is unbounded, there exist colours $c_1, c_2, ..., c_L$ and a vector $\mathbf{y}_0$ such that $\mathbf{y}_L[p] < 0$. Let $\mathbf{y}_1, ..., \mathbf{y}_{L-1}$ be defined as in \Cref{def:neginfline}, and $\mathbf{y}_L = \mathbf{B}_L^{c_L} \mathbf{y}_{L-1}$.

For each $\ell \in  \{0, \dots, L-1\}$, let $\mathbf{z}_\ell = \mathbf{B}^{c_{\ell+1}}_{\ell+1} \mathbf{y}_{\ell}$.
Note that $\mathbf{y}_{\ell+1} = \text{ReLU}(\mathbf{z}_{\ell})$
for $\ell \in \{0, \dots, L-2\}$.

\begin{center}
\textbf{Dataset Construction}
\end{center}

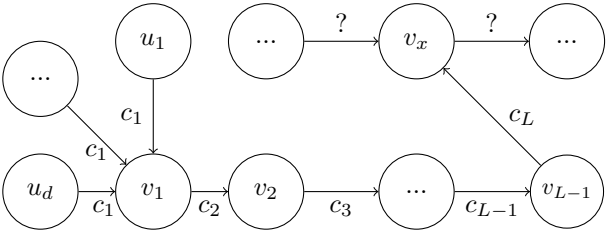
\begin{figure}
\begin{tikzpicture}
    \node[shape=circle,draw=black, minimum size =1cm] (u1) at (0, 11) {$u_1$};
    \node[shape=circle,draw=black, minimum size =1cm] (u) at (-1.5, 10.5) {$...$};
    \node[shape=circle,draw=black, minimum size =1cm] (ud) at (-1.5, 9) {$u_d$};
    \node[shape=circle,draw=black, minimum size =1cm] (v1) at (0, 9) {$v_1$};
    \node[shape=circle,draw=black, minimum size =1cm] (v2) at (1.5, 9) {$v_2$};
    \node[shape=circle,draw=black, minimum size =1cm] (v) at (3.5, 9) {$...$};
    \node[shape=circle,draw=black,scale=0.9] (vL) at (5.5, 9) {$v_{L-1}$};
    \node[shape=circle,draw=black, minimum size =1cm] (vx) at (3.5, 11) {$v_x$};
    \node[shape=circle,draw=black, minimum size =1cm] (b1) at (5.5, 11) {$...$};
    \node[shape=circle,draw=black, minimum size =1cm] (b2) at (1.5, 11) {$...$};

    \path [->] (u1) edge node[left] {${c_1}$} (v1);
    \path [->] (u) edge node[below] {${c_1}$} (v1);
    \path [->] (ud) edge node[below] {${\;\;c_1}$} (v1);
    \path [->] (v1) edge node[below] {${c_2}$} (v2);
    \path [->] (v2) edge node[below] {${c_3}$} (v);
    \path [->] (v) edge node[below] {${c_{L-1}}$} (vL);
    \path [->] (vL) edge node[right] {${\;c_{L}}$} (vx);
    \path [<-] (b1) edge node[above] {$?$} (vx);
    \path [->] (b2) edge node[above] {$?$} (vx);
\end{tikzpicture}
\caption{Succinct representation of the encoding of dataset $D_d$ in the family of datasets used in our counterexample.}
\label{fig:counterexample_in_proof}
\end{figure}

Furthermore, for each $d \in \mathbb{N} \cup \{0\}$, let 
$D_d$ be the dataset defined as the extension of $D^r_\nu$ with:
\begin{itemize}
\item $R^{c_1}(a_i,f_1)$ for $i \in \{1, \dots, d\}$,
\item $R^{c_\ell}(f_{\ell-1},f_{\ell})$
\end{itemize}
for $\ell \in \{2, \dots, L\}$, with $f_L=a=\nu(x)$,
(recall, we are assuming that we derive $U_p(a)$ for some channel $p$ and constant $a$). 
To keep our notation simple, for $d \in \mathbb{N} \cup \{0\}$ and $\ell \in \{0, \dots, L\}$, and constant $f_i$, let $v_i$ be the node in $\texttt{end}(D)$ associated with $f_i$ and likewise $u_i$ the node associated with $a_i$. Let $v_x$ be the node associated with $a$.

Let $\lambda^d_{\ell}(v_i)$ be the vector
assigned to $v_i$ in layer 
$\ell$ when applying $\mathcal{N}$ to the encoding of $D_d$. 
Furthermore, for each $\ell \in \{1, \dots, L-1\}$, let $\mathbf{t}_{\ell} = \lambda^0_{\ell-1}(v_\ell)$,
that is, the labelling assigned to $v_\ell$ in layer $\ell-1$ when applying 
$\mathcal{N}$ to the encoding of $D_0$. 
Observe that for each $d \geq 1$, $v_\ell$ is exactly $\ell$
steps away from each $u_i$, and so 
$\lambda^d_{\ell-1}(v_\ell)$
does not depend on $d$, that is, 
$\lambda^d_{\ell-1}(v_\ell) = 
\mathbf{t}_{\ell}$ for each $d \in \mathbb{N} \cup \{0\}$.

\begin{center}
\textbf{Claim to be Proved by Induction}
\end{center}

We next define inductively over $\ell \in \{1, \dots, L-1\}$
a sequence of natural numbers
$d_1 \leq d_2 \leq \dots \leq d_{L-1}$, and simultaneously
we prove the following property for $d_\ell$: 

\textbf{(Claim)} There exists a vector $\mathbf{g}_{\ell}$ of dimension $\delta_{\ell}$ such that, for each $d \geq d_{\ell}$, it holds that

$$\lambda^d_{\ell}(v_\ell)= \mathbf{g}_{\ell} + d \cdot \mathbf{y}_{\ell}.$$

Note that $\mathbf{g}_{\ell}$ is fixed before quantifying over $d$, and so it is independent of $d$.

To provide a intuitive summary about what we are using each of our variables to track:
\begin{itemize}
    \item Each $\mathbf{y}_{\ell}$ is a vector from \Cref{def:neginfline};
    \item $\mathbf{z}_{\ell}$ refers to the value of $\mathbf{y}_{\ell + 1}$ before the ReLU is applied;
    \item $v_i$ refers to the node in $\texttt{enc}(D_d)$ corresponding to $f_i$;
    \item $\lambda_\ell^d(v_i)$ refers to the label of $v_i$ in the graph $\texttt{enc}(D_d)$ after $\ell$ layers of $\mathcal{N}$;
    \item $\mathbf{t}_{\ell} = \lambda^0_{\ell-1}(v_\ell)$ is the labelling assigned to $v_\ell$ in layer $\ell-1$ when applying $\mathcal{N}$ to the encoding of $D_0$; and
    \item $\mathbf{g_\ell}$ refers to the non-$d$ component of $\lambda^d_{\ell}(v_\ell)$.
\end{itemize}

\begin{center}
\textbf{Base Case}
\end{center}

For the base case, we define $d_1$ as the smallest natural number such that 
for each $i \in \{1, \dots, \delta_1 \}$ with 
$\mathbf{z}_{0}[i] \neq 0$, it holds that 
$$d_{1} \cdot |\mathbf{z}_{0}[i]| > |\textbf{b}_1[i] + (\mathbf{A}_{1} \mathbf{t}_{1}) [i]|.$$
Clearly, such number always exists, since $\mathbf{z}_{0}[i] \neq 0$ implies 
$|\mathbf{z}_{0}[i]| > 0$. 

Now we prove the claim for $\ell=1$.
First, we have that for each $d$, $\lambda^d_{1}(v_1) = \text{ReLU}(\mathbf{b}_1 + \mathbf{A}_{1} \mathbf{t}_{1} + d \cdot \mathbf{z}_0)$.
Note that $\mathbf{t}_{1}$ has all elements equal to $0$ 
(since $\lambda^0_{0}(v_1)$ does by definition), so we ignore this term.
Now, consider an arbitrary $i \in \{1, \dots, \delta_1\}$.
We define $\mathbf{g}_1[i]$ attending to three possible cases:
\begin{itemize}
\item If $\mathbf{z}_0[i]$ is negative, we let $\mathbf{g}_1[i] = 0$.
To see that the claim holds, consider an arbitrary $d \geq d_1$.
Since $\mathbf{z}_0[i]$ is negative, 
$\mathbf{z}_0[i] \neq 0$; then, the definition of $d_1$ ensures that  
$\mathbf{b}_1[i] + d_1 \cdot \mathbf{z}_0[i] < 0$.
Since $d \geq d_1$, we then have 
$\mathbf{b}_1[i] + d \cdot \mathbf{z}_0[i] <0$ and so $\lambda^d_{1}(v_1)=0$.
However, $\mathbf{y}_{1}[i] =  \text{ReLU}(\mathbf{z}_0[i])=0$. Thus,
the claim holds.
\item If $\mathbf{z}_0[i] =0$, we let 
$\mathbf{g}[i] = \text{ReLU}(\mathbf{b}_1[i])$.
To prove the claim, assume again $d \geq d_1$. Since $\mathbf{z}_0[i] =0$,
$\lambda^d_{1}(v_1)[i]=\text{ReLU}(\mathbf{b}_\ell[i])$, so
the claim holds.
\item If $\mathbf{z}_0[i]$ is positive, then $\mathbf{g}_1[i]=\mathbf{b}_1[i]$.
To show the claim, consider $d \geq d_1$. Since $\mathbf{z}_0[i]$ is positive,
$\mathbf{z}_0[i] \neq 0$, and 
the definition of $d_1$ ensures that $\mathbf{b}_1[i] + d_1 \cdot \mathbf{z}_0[i] > 0$, but since $d \geq d_1$,
we have $\mathbf{b}_1[i] + d \cdot \mathbf{z}_0[i] > 0$, and so 
$\lambda_1^d(v_1)[i] = \mathbf{b}_1[i] + d \cdot \mathbf{z}_0[i]$.
Furthermore, $\mathbf{z}_0[i]$ being positive implies $\mathbf{y}_1[i] = \mathbf{z}_0[i]$, and so the claim holds.
\end{itemize}

\begin{center}
\textbf{Inductive Step}
\end{center}

Consider $\ell \in \{2, \dots, L-1\}$; suppose that 
$d_{\ell-1}$ is defined and the corresponding claim
holds, that is, there exists $\mathbf{g}_{\ell-1}$
such that for each $d \geq d_{\ell-1}$, it 
holds that 

$$ \lambda_{\ell-1}^d(v_{\ell-1}) = \mathbf{g}_{\ell-1} + d_{\ell-1} \cdot \mathbf{y}_{\ell-1}. $$

We now define $d_\ell$ as the smallest natural number greater than or equal to $d_{\ell-1}$ such that 
for each $i \in \{1, \dots, \delta_\ell\}$ with 
$\mathbf{z}_{\ell}[i] \neq 0$, it holds that 

$$d_{\ell} \cdot |\mathbf{z}_{\ell - 1}[i]| > |\textbf{b}_\ell[i] + (\mathbf{A}_{\ell} \mathbf{t}_{\ell})[i] + (\mathbf{B}^{c_\ell}_{\ell} \mathbf{g}_{\ell-1})[i] |.$$

Clearly, such a number always exists, since $\mathbf{z}_{\ell - 1}[i] \neq 0$
implies $|\mathbf{z}_{\ell - 1}[i]| >0$. 

We next prove the corresponding claim, that is, 
there exists $\mathbf{g}_{\ell}$ such that 
for arbitrary $d \geq d_{\ell}$, 

$$\lambda_{\ell}^d(v_{\ell}) = \mathbf{g}_{\ell} + d \cdot \mathbf{y}_{\ell},$$

for a fixed $\mathbf{g}_{\ell}$.
To define it first note that 
the vertices $u_1, ..., u_d$ are exactly $\ell$ steps away
from $v_\ell$, so such vertices do not affect the computation of
$\lambda_{\ell - 1}^d(v_{\ell})$; using this observation and 
the induction hypothesis (since $d_\ell \geq d_{\ell - 1}$) we have that for any $d \geq d_\ell$, 

$$\lambda_{\ell}^d(v_{\ell}) = \text{ReLU}(\mathbf{b}_\ell + \mathbf{A}_{\ell} \mathbf{t}_{\ell} + \textbf{B}^{c_\ell}_\ell ( \mathbf{g}_{\ell-1} + d \cdot \mathbf{y}_{\ell-1}),$$

which, using distribution of matrix product over vectors, 
can be rewritten as 
$$\lambda_{\ell}^d(v_{\ell}) = \text{ReLU}(\mathbf{b}_\ell + \mathbf{A}_{\ell} \mathbf{t}_{\ell} + \textbf{B}^{c_\ell}_\ell \mathbf{g}_{\ell-1} + d \cdot \mathbf{z}_{\ell-1}).$$
We next consider an arbitrary $i \in \{1, \dots,\delta_\ell\}$ and
define $\mathbf{g}_\ell[i]$, again attending to the sign of 
$\mathbf{z}_{\ell-1}[i]$.

\begin{itemize}
\item If $\mathbf{z}_{\ell-1}[i]$ is negative, then $\mathbf{g}_\ell[i] = 0$.
To show the claim let $d \geq d_{\ell}$.
Then $\mathbf{b}_\ell[i] + (\mathbf{A}_{\ell} \mathbf{t}_{\ell})[i] + (\mathbf{B}^{c_\ell}_\ell \mathbf{g}_{\ell-1})[i] + d_{\ell} \cdot \mathbf{z}_{\ell-1}[i] < 0$, by definition of $d_{\ell}$; however, since $d\geq d_\ell$, we have
$\mathbf{b}_\ell[i] + (\mathbf{A}_{\ell} \mathbf{t}_{\ell})[i] + (\mathbf{B}^{c_\ell}_\ell \mathbf{g}_{\ell-1})[i] + d \cdot \mathbf{z}_{\ell-1}[i] < 0$, and so 
$\lambda_{\ell}^d(v_{\ell})[i]=0$.
However, $\mathbf{y}_{\ell}[i] =  \text{ReLU}(\mathbf{z}_{\ell-1}[i])=0$. Thus,
the claim holds.

\item If $\mathbf{z}_{\ell-1}[i] =0$, we let 
$\mathbf{g}[i] = \text{ReLU}(\mathbf{b}_\ell[i] + (\mathbf{A}_{\ell} \mathbf{t}_{\ell})[i] + (\textbf{B}^{c_\ell}_\ell \mathbf{g}_{\ell-1})[i] )$.
To prove the claim, assume again $d \geq d_\ell$. Since $\mathbf{z}_{\ell-1}[i] =0 = \mathbf{y}_\ell[i]$, we have
$\lambda^d_{\ell}(v_\ell)[i]=\mathbf{g}_\ell[i]$,
and the claim holds. 

\item If $\mathbf{z}_{\ell-1}[i]$ is positive, then $\mathbf{g}_{\ell}[i]=\mathbf{b}_\ell[i] + (\mathbf{A}_{\ell} \mathbf{t}_{\ell})[i] + (\mathbf{B}^{c_\ell}_\ell \mathbf{g}_{\ell-1})[i]$.
To prove the claim, consider $d \geq d_{\ell}$. The definition of $d_\ell$ ensures that $\mathbf{b}_\ell[i] + (\mathbf{A}_{\ell} \mathbf{t}_{\ell})[i] + (\mathbf{B}^{c_\ell}_\ell \mathbf{g}_{\ell-1})[i] + d_{\ell} \cdot \mathbf{z}_{\ell-1}[i] > 0$, but since $d \geq d_\ell$, we have 
$\mathbf{b}_\ell[i] + (\mathbf{A}_{\ell} \mathbf{t}_{\ell})[i] + (\mathbf{B}^{c_\ell}_\ell \mathbf{g}_{\ell-1})[i] + d \cdot \mathbf{z}_{\ell-1}[i] > 0$, and so 
$\lambda_{\ell}^d(v_{\ell})[i] = \mathbf{b}_\ell[i] + (\mathbf{A}_{\ell} \mathbf{t}_{\ell})[i] + (\mathbf{B}^{c_\ell}_\ell \mathbf{g}_{\ell-1})[i] + d \cdot \mathbf{z}_{\ell-1}[i] > 0$.
Furthermore, $\mathbf{z}_{\ell-1}[i]$ being positive implies $\mathbf{y}_\ell[i] = \mathbf{z}_{\ell-1}[i]$, and so the claim holds.
\end{itemize}

This concludes the proof by induction.

\begin{center}
\textbf{Final $d_L$}
\end{center}

From the proof by induction, there exists $\mathbf{g}_{L-1}$
such that for each $d \geq d_{L-1}$, it 
holds that 

$$ \lambda_{L-1}^d(v_{L-1}) = \mathbf{g}_{L-1} + d_{L-1} \cdot \mathbf{y}_{L-1}. $$

Recall that since channel $p$ is unbounded, we have $\mathbf{y}_L[p] < 0$. Also there exists $\tau \in \mathbb{R}$ such that $\sigma_L(\tau) < t$, where $t$ refers to the threshold of the classifier $\texttt{cls}_t$ of $\mathcal{N}$.
We now define $d_L$ as the smallest natural number greater than or equal to $d_{L-1}$ such that

$$d_{L} \cdot \mathbf{y}_{L}[p] < -|(\textbf{b}_L + \mathbf{A}_{L} \mathbf{t}_{L} + \mathbf{B}^{c_L}_{L} \mathbf{g}_{L-1} + \mathbf{x}_L)[p] | + \tau ,$$

where $\mathbf{x}_L$ refers to the value passed from the other neighbours of $v_x$ (not including $v_{L-1}$). Note that this value does not depend on $d$.

Clearly, such a number always exists, since $\mathbf{y}_{L}[p] < 0$. 
Then $(d_{L} \cdot \mathbf{y}_{L} + \textbf{b}_L + \mathbf{A}_{L} \mathbf{t}_{L} + \mathbf{B}^{c_L}_{L} \mathbf{g}_{L-1} + \mathbf{x}_L)[p] < \tau$.
Then for any $d \geq d_L$, the above inequality still holds.
Also since $\sigma_L$ is monotonically increasing, we can derive $\sigma_L( d \cdot \mathbf{y}_{L} + \textbf{b}_L + \mathbf{A}_{L} \mathbf{t}_{L} + \mathbf{B}^{c_L}_{L} \mathbf{g}_{L-1} + \mathbf{x}_L )[p] < \sigma_L(\tau)$.

But notice that $\lambda_L^d(v_x) = \sigma_L( d \cdot \mathbf{y}_{L} + \textbf{b}_L + \mathbf{A}_{L} \mathbf{t}_{L} + \mathbf{B}^{c_L}_{L} \mathbf{g}_{L-1} + \mathbf{x}_L )$,
so $\lambda_L^d(v_x)[p] < \sigma(\tau) < t$.

\begin{center}
\textbf{Conclusion}
\end{center}

Fix $d = d_L$. We now have our counterexample dataset $D_d$.
Since $\lambda_L^d(v_x)[p] < t$, $U_p(a) \not\in T_\mathcal{N}(D_d)$.
But since $D_\nu^r \subseteq D_d$, $U_p(a) \in T_r(D_d)$, so $T_r(D_d) \not\subseteq T_\mathcal{N}(D_d)$.
Thus, $r$ is not sound for $T_\mathcal{N}$.
So, by contradiction, no rule mentioning $U_p$ in the head is sound for $T_\mathcal{N}$.

\end{proof}

\subsection{Theorem \ref{thm:unbounded_channel_relations}} \label{app:unbounded_channel_relations_proof}
\begin{theorem*}
\changemarker{
Unbounded channels are neither increasing, nor stable, nor safe. There exist, however, decreasing unbounded channels and undetermined unbounded channels.
}
\end{theorem*}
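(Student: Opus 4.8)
The plan is to split the statement into an impossibility part (an unbounded channel cannot be increasing, stable, or safe) and an existence part (decreasing and undetermined unbounded channels exist), dispatching the first by a purely sign-based argument on the weights $\mathbf{B}_L^{c_L}$ and the second by small explicit sum-GNNs. The single fact that powers the impossibility results is the following: by \Cref{def:neginfline}, an unbounded channel $p$ comes equipped with a Boolean $\mathbf{y}_0$ and colours $c_1,\dots,c_L$ such that the vectors $\mathbf{y}_\ell = \mathrm{ReLU}(\mathbf{B}_\ell^{c_\ell}\mathbf{y}_{\ell-1})$ satisfy $(\mathbf{B}_L^{c_L}\mathbf{y}_{L-1})[p] < 0$. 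Since $\mathbf{y}_0$ is Boolean and each $\mathbf{y}_\ell$ is an image of $\mathrm{ReLU}$, the vector $\mathbf{y}_{L-1}$ is non-negative in every entry; this is the observation I would use repeatedly.

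Next I would rule out stability and the increasing case by inspecting the $p$-th rows of the matrices $\mathbf{B}_L^{c}$ in \Cref{def:updown}. If $p$ were stable at layer $L$, the definition forces $\mathbf{B}_L^{c}[p,j]=0$ for every colour $c$ and index $j$; instantiating $c=c_L$ then yields $(\mathbf{B}_L^{c_L}\mathbf{y}_{L-1})[p]=0$, contradicting strict negativity. If $p$ were increasing, condition (4) forces $\mathbf{B}_L^{c}[p,j]\ge 0$ for all $c,j$, so, using $\mathbf{y}_{L-1}\ge \mathbf 0$ entrywise, the sum $(\mathbf{B}_L^{c_L}\mathbf{y}_{L-1})[p]=\sum_j \mathbf{B}_L^{c_L}[p,j]\,\mathbf{y}_{L-1}[j]$ is a sum of non-negative products and hence non-negative, again a contradiction. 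Safety then needs no extra work: by \Cref{thm:channel_relations} every safe channel is increasing or stable, so a channel that is neither cannot be safe.

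For the existence part I would exhibit minimal sum-GNNs; a single layer already suffices, so the intermediate-ReLU requirement of \Cref{def:neginfline} is vacuous and channel classification is insensitive to the choice of activations. For a decreasing unbounded channel, take $\delta_0=\delta_1=1$ with one colour $c$, $\mathbf{A}_1=(0)$ and $\mathbf{B}_1^{c}=(-1)$: all decreasing clauses of \Cref{def:updown} hold (every layer-$0$ channel is increasing, $\mathbf{A}_1[1,1]=0\le0$, $\mathbf{B}_1^{c}[1,1]=-1\le0$) and the channel is non-stable, while $\mathbf{y}_0=(1)$ gives $(\mathbf{B}_1^{c}\mathbf{y}_0)[1]=-1<0$, so it is unbounded. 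For an undetermined unbounded channel, take $\delta_0=\delta_1=2$, $\mathbf{A}_1=\mathbf{0}$, and a $p$-th row of $\mathbf{B}_1^{c}$ with mixed signs, say $(1,-2)$: the positive entry blocks decreasing, the negative entry blocks increasing, and the nonzero entries block stability, forcing the undetermined class, while $\mathbf{y}_0=(1,1)^{\top}$ gives row value $1-2=-1<0$, witnessing unboundedness. I would close by noting that these gadgets extend to arbitrary $L$, dimensions, and colour sets by propagating the signal through intermediate layers with positive $\mathbf{B}$-entries (keeping the intermediate channels increasing) and padding unused coordinates with zeros, in the same spirit as the constructions in the proof of \Cref{thm:channel_relations}.

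I do not anticipate a genuine obstacle: once $\mathbf{y}_{L-1}\ge \mathbf 0$ is recorded, the impossibility results are immediate sign computations, and the existence results are routine case checks. The only point demanding mild care is verifying that each example channel satisfies every clause of the relevant class in \Cref{def:updown} (not merely condition (4)), and that the exhibited witness $(\mathbf{y}_0,c_1,\dots,c_L)$ really produces a strictly negative $p$-th entry at the final layer.
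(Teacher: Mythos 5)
Your proposal is correct and takes essentially the same approach as the paper: the impossibility half is the identical sign argument (ReLU makes $\mathbf{y}_{L-1}$ entrywise non-negative, so a stable or increasing channel $p$ would force $(\mathbf{B}_L^{c_L}\mathbf{y}_{L-1})[p] \geq 0$ by \Cref{def:updown}, contradicting unboundedness, with safety then excluded via \Cref{thm:channel_relations}), and the existence half uses the same style of explicit counterexample GNNs with zero $\mathbf{A}$-matrices and a negative or mixed-sign row in the final $\mathbf{B}$-matrix. The only difference is cosmetic---your witnesses are one-layer where the paper's are two-layer, and both generalise to arbitrary depth, width, and colour sets by the same identity-propagation scheme.
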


\begin{proof}
Consider a sum-GNN $\mathcal{N}$ as in \Cref{eq:GNN}. We prove the Theorem in two parts.

\begin{center}
\textbf{Unbounded channels are neither increasing, stable, nor safe}
\end{center}

Consider a channel $i \in \{ 1, ..., \delta_L \}$ at layer $L$. Assume to the contrary that $i$ is unbounded, and that it is also increasing or stable at layer $L$.

Let $\mathbf{y}_1, ..., \mathbf{y}_{L-1}$ be the sequence arising from \Cref{def:neginfline}. Since $i$ is unbounded, we have that $(\mathbf{B}_L^{c_L} \mathbf{y}_{L-1})[i] <0$. But since $\mathbf{y}_{L-1}[j] \geq 0$ for each $j \in \{ 1, ..., \delta_{L-1} \}$ (due to the application of ReLU), we have $\mathbf{B}_L^{c_L}[k] < 0$ for some $k \in \{ 1, ..., \delta_L \}$.

From \Cref{def:updown}, this implies that $i$ is not stable at layer $L$. Furthermore, from condition (4) of \Cref{def:updown}, $i$ is not increasing at layer $L$. This is a contradiction and thus proves that an unbounded channel cannot be increasing nor stable.

Finally, from \Cref{thm:channel_relations}, safe channels are increasing or stable. Thus, an unbounded channel also cannot be safe, proving the claim.

\begin{center}
\textbf{There exist decreasing unbounded channels and undetermined unbounded channels}
\end{center}
For simplicity, we provide examples of sum-GNNs with such unbounded channels using 2-layer GNNs with $\delta_0 = \delta_1 = \delta_2 = 2$ and $|\col| = 1$. However, the examples can be extended to sum-GNNs with an arbitrary number of layers, arbitrary $\delta$, and arbitrary number of colours.

Let $c \in \col$ be the unique colour in $\col$.
First, to show that there exist unbounded channels that are decreasing, consider a 2-layer sum-GNN with the the following matrices. Let $\mathbf{A}_{1} = \mathbf{A}_{2}$ be $2 \times 2$ matrices filled with zeroes. Define $\mathbf{B}_{1}^c$ and $\mathbf{B}_{2}^c$ as follows:

$$
\mathbf{B}_{1}^c = \begin{pmatrix}
1 & 0 \\
0 & 1 \\
\end{pmatrix},~~~~~~~
\mathbf{B}_{2}^c = \begin{pmatrix}
-1 & 0 \\
0 & 0 \\
\end{pmatrix}
$$

Then for $\mathbf{y}_0 = ( 1,~ 1 )^\top$, $\mathbf{y}_1 := \text{ReLU} ( \mathbf{B}_1^{c} \mathbf{y}_{0} ) = ( 1,~ 1 )^\top$ and $\mathbf{B}_2^{c} \mathbf{y}_{1} = ( -1,~ 0 )^\top$, so channel $1$ is unbounded.
Also, all channels are increasing at layer $1$, so channel $1$ is decreasing at layer $2$, providing the required example.

Next, to show that there exist unbounded channels that are undetermined, again consider a 2-layer sum-GNN with the the following matrices. Let $\mathbf{A}_{1} = \mathbf{A}_{2}$ be $2 \times 2$ matrices filled with zeroes. Define $\mathbf{B}_{1}^c$ and $\mathbf{B}_{2}^c$ as follows:

$$
\mathbf{B}_{1}^c = \begin{pmatrix}
1 & 0 \\
0 & 1 \\
\end{pmatrix},~~~~~~~
\mathbf{B}_{2}^c = \begin{pmatrix}
-2 & 1 \\
0 & 0 \\
\end{pmatrix}
$$

Then for $\mathbf{y}_0 = ( 1,~ 1 )^\top$, $\mathbf{y}_1 := \text{ReLU} ( \mathbf{B}_1^{c} \mathbf{y}_{0} ) = ( 1,~ 1 )^\top$ and $\mathbf{B}_2^{c} \mathbf{y}_{1} = ( -1,~ 0 )^\top$, so channel $1$ is unbounded.
Also, all channels are increasing at layer $1$, so channel $1$ is undetermined at layer $2$, providing the required example.

The above examples can be extended to an arbitrary number of layers $L$, dimensions $\delta_0, ..., \delta_L$, and colour set $\col$ by using the following scheme:
\begin{enumerate}
\item Pick some colour $c_0 \in \col$.
\item For each $\ell \in \{ 1, ..., L \}$ and $c \in \col$ such that $c \not= c_0$, let $\mathbf{B}_{\ell}^c$ and $\mathbf{A}_{\ell}$ be matrices of zeroes.
\item For each $\ell \in \{ 1, ..., L - 1 \}$, let $\mathbf{B}_{\ell}^{c_0}$ be the identity matrix.
\item Let $\mathbf{B}_{L}^{c_0}$ consist entirely of zeroes, except for $\mathbf{B}_{L}^{c_0} [0, 0]$ and $\mathbf{B}_{L}^{c_0} [0, 1]$.

\begin{enumerate}
\item To obtain the case where channel $1$ is both unbounded and decreasing, let $\mathbf{B}_{L}^{c_0} [0, 0] = -1$ and $\mathbf{B}_{L}^{c_0} [0, 1] = 0$, as in the example given above.
\item To obtain the case where channel $1$ is both unbounded and undetermined, let $\mathbf{B}_{L}^{c_0} [0, 0] = -2$ and $\mathbf{B}_{L}^{c_0} [0, 1] = 1$, also as above.
\end{enumerate}

\end{enumerate}

\end{proof}
\clearpage

\section{More Details About Experiments}
Instructions for how to run our code and reproduce our results can be found in the ``README.md'' file in our code.

The parameters we use to generate all our novel LogInfer datasets are given in \Cref{tab:loginfer_params}.

\begin{table}
\centering
\begin{tabular}{lrrrr}
\toprule
Dataset & $k_1$ & $k_2$ & \#M & \#NM \\
\midrule
FB-superhier & 5 & 50000 & 237 & 0 \\
WN-superhier & 200 & 50000 & 237 & 0 \\
\midrule
WN-hier\_nmhier & 5, 20 & 2000, 2000 & 1 & 10 \\
WN-cup\_nmhier & 5, 20 & 2000, 2000 & 1 & 10 \\
FB-hier\_nmhier & 50, 200 & 200, 200 & 21 & 216 \\
FB-cup\_nmhier & 50, 200 & 200, 200 & 21 & 216 \\

\bottomrule
\end{tabular}
\caption{Generation parameters for the new LogInfer datasets. Commas separate values where a dataset uses two rule patterns, and show the respective parameter values for the two patterns. \#M and \#NM are the number of distinct head predicates reserved for monotonic and non-monotonic rules, respectively.}
\label{tab:loginfer_params}
\end{table}

In \Cref{tab:rule_samples}, we provide a sample sound rule for each dataset/model combination that had some sound rules.

\begin{table*}
\centering
\resizebox{2.11\columnwidth}{!}{
\begin{tabular}{ll|l}
\toprule
Dataset & Model & Sample Rule \\
\midrule

WN-hier
& MGCN & \_similar\_to(x,y) and \_instance\_hypernym(y,y) implies \_member\_meronym(y,y) \\

\midrule

WN-sym
& MGCN & \_has\_part(y,x) and \_derivationally\_related\_form(y,y) implies \_derivationally\_related\_form(y,y) \\

\midrule

WN-superhier
& MGCN & \_synset\_domain\_topic\_of(y,y) and \_member\_of\_domain\_usage(y,z) implies \_hypernym(y,y) \\

\midrule

FB-hier
& MGCN & /base/popstra/celebrity/dated./base/popstra/dated/participant(x,y) implies /base/popstra/celebrity/dated./base/popstra/dated/participant(y,x) \\

\midrule

FB-sym
& MGCN & /influence/influence\_node/influenced\_by(x,x) implies /influence/influence\_node/influenced\_by(x,x) \\

\midrule

FB-superhier
& MGCN & /award/award\_category/winners./award/award\_honor/award\_winner(x,y) implies /celebrities/celebrity/celebrity\_friends./celebrities/friendship/friend(x,y) \\

\midrule

FB237v1
& MGCN & /olympics/olympic\_games/sports(x,y) implies /base/aareas/schema/administrative\_area/administrative\_parent(x,y) \\

\midrule

NELLv1
& R-0 & concept:headquarteredin(y,y) and concept:acquired(y,y) implies concept:acquired(y,y) \\
& R-25 & concept:acquired(y,y) and concept:organizationheadquarteredincity(y,z) implies concept:topmemberoforganization(y,y) \\
& R-50 & concept:worksfor(x,x) implies concept:organizationhiredperson(x,x) \\
& MGCN & concept:agentcontrols(y,y) and concept:subpartoforganization(y,z) implies concept:agentcollaborateswithagent(y,y) \\

\midrule

WN18RRv1
& R-0 & \_also\_see(x,y) implies \_also\_see(y,x) \\
& R-25 & \_verb\_group(x,y) and \_has\_part(z,y) implies \_derivationally\_related\_form(x,y) \\
& R-50 & \_has\_part(y,y) and \_also\_see(y,z) implies \_also\_see(z,y) \\
& MGCN & \_derivationally\_related\_form(x,y) implies \_derivationally\_related\_form(y,x) \\

\midrule

WN-hier\_nmhier
& R-0 & \_synset\_domain\_topic\_of(x,y) implies \_instance\_hypernym(x,y) \\
& R-25 & \_synset\_domain\_topic\_of(x,y) and \_also\_see(y,y) implies \_verb\_group(x,y) \\
& R-50 & \_member\_meronym(x,x) implies \_hypernym(x,x) \\
& MGCN & \_hypernym(x,y) and \_has\_part(z,y) implies \_derivationally\_related\_form(z,y) \\

\midrule

WN-cup\_nmhier
& R-0 & \_synset\_domain\_topic\_of(y,y) and \_derivationally\_related\_form(y,y) implies \_has\_part(y,y) \\
& R-25 & \_member\_of\_domain\_region(x,x) implies \_derivationally\_related\_form(x,x) \\
& R-50 & \_similar\_to(y,x) and \_also\_see(y,y) implies \_instance\_hypernym(y,x) \\
& MGCN & \_also\_see(x,y) and \_member\_of\_domain\_region(y,z) implies \_also\_see(y,x) \\

\bottomrule
\end{tabular}
} 
\caption{Sample sound rules for the models that had some sound rules.}
\label{tab:rule_samples}
\end{table*}

\end{document}